\documentclass[10pt]{article} % For LaTeX2e

\usepackage[preprint]{rlj} % Should be uncommented for preprint versions

\usepackage{amssymb}            % Defines common symbols like \mathbb R
\usepackage{mathtools}          % Extends amsmath, providing common math tools
\usepackage{mathrsfs}           % Enables \mathscr, which can work in cases that \mathcal does not
\usepackage{graphicx}           % For including images
\usepackage{subcaption}         % Allows for the use of subfigures and subcaptions
\usepackage[space]{grffile}     % For spaces in image names
\usepackage{url}                % For displaying URLs
\usepackage{lipsum}             % For placeholder text

\usepackage{enumitem}
\usepackage{algorithm}
\usepackage{algorithmic}
\usepackage{multirow}
\usepackage{makecell}
\usepackage{tikz}
\usepackage{subcaption}
\usepackage{amsthm}
\usepackage{booktabs}

\usepackage{thmtools}
\usepackage{thm-restate}
\usepackage{amsmath,amsfonts,bm}

\def\eqref#1{equation~\ref{#1}}
\def\1{\bm{1}}

\DeclareMathAlphabet{\mathsfit}{\encodingdefault}{\sfdefault}{m}{sl}
\SetMathAlphabet{\mathsfit}{bold}{\encodingdefault}{\sfdefault}{bx}{n}

\def\gD{{\mathcal{D}}}

\def\gL{{\mathcal{L}}}

\def\gS{{\mathcal{S}}}

\def\sN{{\mathbb{N}}}

\def\sR{{\mathbb{R}}}

\def\sZ{{\mathbb{Z}}}

\newcommand{\E}{\mathbb{E}}

\newcommand{\Var}{\mathrm{Var}}

\newcommand{\Cov}{\mathrm{Cov}}
\DeclareMathOperator*{\argmin}{arg\,min}

\def\n1{\mathbf{n^{-1}}}
\def\rmat{\mathbf{R}}
\def\pmat{\mathbf{P}}
\def\mmat{\mathbf{M}}
\def\vvec{\mathbf{V}}
\def\avec{\mathbf{A}}

\newcommand{\statespace}{\mathcal{S}}
\newcommand{\actionspace}{\mathcal{A}}
\newcommand{\sufmca}{\mathrm{MC}\text{-}\mathrm{A}}

\newtheorem{theorem}{Theorem}
\newtheorem{lemma}{Lemma}
\newtheorem{corollary}{Corollary}
\title{On the Variance of Temporal Difference Learning and its Reduction Using Control Variates}

\setrunningtitle{On the Variance of TD Learning and its Reduction Using Control Variates}

\author{Hsiao-Ru Pan\textsuperscript{1}, Bernhard Sch\"olkopf\textsuperscript{1,2,3}}

\emails{\{hpan,bs\}@tuebingen.mpg.de}

\affiliations{
$^{1}$\textbf{Max Planck Institute for Intelligent Systems, T\"ubingen}\\
$^{2}$\textbf{ELLIS Institute T\"ubingen}\\
$^{3}$\textbf{ETH Z\"urich}\\
}

\contribution{
    We prove that, in the phased (synchronous) and the IID (asynchronous) settings, the asymptotic variance of temporal difference (TD) learning is upper-bounded by that of Monte Carlo methods, and that one mechanism underlying TD’s variance reduction is its effective aggregation over a larger pool of trajectories.
    }
    {
    TD learning is widely assumed to reduce variance via bootstrapping~\citep{sutton1998introduction, szepesvari2010algorithms, dann2014policy}.
    While previous works have shown that bootstrapping can be seen as a form of bias-variance tradeoff for multi-step TD~\citep{kearns2000bias}, and that batch TD is more efficient than MC~\citep{grunewalder2007optimality, cheikhi2023statistical}, the variance comparison is less clear in the online regime. We address this gap by analyzing TD’s variance reduction in the phased (synchronous) setting.
    }

\contribution{
    We show that (1) the advantage function can be used as control variates for \emph{policy evaluation}, and (2) Direct Advantage Estimation (DAE)~\citep{pan2022direct} is a form of control variate regression that simultaneously estimates both the value and advantage functions.
    }
    {
    While \citet{pan2022direct} demonstrated empirically that DAE improves deep RL performance, its theoretical justification has remained limited to convergence of the estimator. 
    We provide a complementary perspective by showing that DAE also reduces variance by analyzing it through the lens of control variates in policy evaluation, thereby offering a theoretical explanation for its empirical performance gains.
    }

\keywords{temporal difference learning, Monte Carlo estimation, advantage function, control variate} % Your keywords

\summary{
We analyze the variance of temporal difference (TD) learning using the phased setting with tabular representation, and show that one of the mechanisms behind its ability to reduce variance is by effectively aggregating over a larger number of independent trajectories.
Based on this insight, we demonstrate that (1) the variance of TD is asymptotically bounded from above by Monte Carlo (MC) estimators, and (2) shorter horizon updates incurs less variance for a fixed number of samples.
Beyond TD, we show that Direct Advantage Estimation (DAE), a method for estimating the advantage function, can be seen as a type of regression-adjusted control variate, which achieves a tighter bound on the variance compared to TD in the large-sample limit.
Finally, we numerically illustrate the behaviors of these estimators with carefully designed environments.
}

\begin{document}

\makeCover  % Create the cover page
\maketitle  % Make the title section

\begin{abstract}
We analyze the variance of temporal difference (TD) learning using the phased setting with tabular representation, and show that one of the mechanisms behind its ability to reduce variance is by effectively aggregating over a larger number of independent trajectories.
Based on this insight, we demonstrate that (1) the variance of TD is asymptotically bounded from above by Monte Carlo (MC) estimators, and (2) shorter horizon updates incurs less variance for a fixed number of samples.
Beyond TD, we show that Direct Advantage Estimation (DAE), a method for estimating the advantage function, can be seen as a type of regression-adjusted control variate, which achieves a tighter bound on the variance compared to TD in the large-sample limit.
Finally, we numerically illustrate the behaviors of these estimators with carefully designed environments.
\end{abstract}

%%%%%%%%%%%%%%%%%%%%%%%%%%%%%%%%%%%%%%%%%%%%%%%%%%%%%%%%%%%%%%%%
%% Section: Submission of papers to RLJ/RLC
%%%%%%%%%%%%%%%%%%%%%%%%%%%%%%%%%%%%%%%%%%%%%%%%%%%%%%%%%%%%%%%%
\section{Introduction}
\label{sec:intro}
Policy evaluation is a central problem in reinforcement learning (RL)~\citep{sutton1998introduction}.
Among various methods, temporal difference (TD) learning~\citep{sutton1988learning} stands out as a cornerstone technique for this class of problems.
Traditional methods like Monte Carlo (MC) methods estimate the expected return by averaging returns of sample trajectories, typically resulting in unbiased but high variance estimates.
In contrast, TD learning updates value estimates iteratively through bootstrapping (i.e., estimate based on previous estimates), thereby avoiding the need for full trajectories and tends to exhibit lower variance.
\citet{kearns2000bias} showed that bootstrapping can be seen as a form of bias-variance tradeoff, where high variance estimates from sample trajectories are replaced with low variance biased bootstrapped values.
While this intuition largely holds true, it is not difficult to see that the full story is more complicated.
For example, if we initialize the value estimates with the \emph{ground truth} values and update them using TD, then we essentially \emph{inject} variance (e.g., stochastic rewards) into the estimates, and it is not immediately clear how this variance affects bootstrapping asymptotically.
This shows that the way bootstrapping reduces variance may be more nuanced than simply "replace a high variance trajectory with a biased estimate".

In the present work, we analyze the variance of multi-step TD in the phased (synchronous) setting~\citep{kearns1998finite}, which abstracts away some of the complexities due to stochastic approximations.
Beyond TD, we also draw connections to Direct Advantage Estimation (DAE)~\citep{pan2022direct}, a method that simultaneously estimates the value function and the advantage function, and show that DAE can be seen as a type of control variate regression that enjoys a tighter bound on the variance compared to TD.
To summarize, we show that:
\begin{itemize}
    \item The asymptotic variance of multi-step TD is bounded above by that of MC, and one way TD reduces variance is by effectively averaging over a larger pool of trajectories.
    \item The advantage function can be seen as control variates for the MC estimator, which reduces its variance, and DAE is a type of regression-adjusted control variate for TD. 
\end{itemize}
Finally, we construct examples to illustrate the asymptotic behaviors of these estimators.

%%%%%%%%%%%%%%%%%%%%%%%%%%%%%%%%%%%%%%%%%%%%%%%%%%%%%%%%%%%%%%%%
%% Section: Citations, figures, tables, references, equations
%%%%%%%%%%%%%%%%%%%%%%%%%%%%%%%%%%%%%%%%%%%%%%%%%%%%%%%%%%%%%%%%
\section{Background}\label{sec:background}
We consider a discounted Markov Decision Process (MDP)~\citep{puterman2014markov} $(\statespace, \actionspace, p, r, \gamma)$ with finite state space $\statespace$, finite action space $\actionspace$, transition probability $p(s'|s, a)$, reward function $r(s,a)$, and discount factor $\gamma \in [0, 1)$.
For simplicity, we assume the reward function is deterministic unless otherwise stated and denote $r(s_t, a_t)$ by $r_t$ when the context is clear.
A policy $\pi(\cdot|s)$ is a function that maps states to distributions over $\actionspace$, and we focus on the case where the policy $\pi$ is fixed.
The value functions are defined by $V^\pi(s)=\E_\pi[\sum_{t=0}^\infty \gamma^t r_t|s_0{=}s]$, and $Q^\pi(s,a)=\E_\pi[\sum_{t=0}^\infty \gamma^t r_t|s_0{=}s, a_0{=}a]$, respectively ($\E_\pi$ indicates that actions are sampled from $\pi$).
The advantage function is defined by $A^\pi(s,a)=Q^\pi(s,a) - V^\pi(s)$.
For the present work, we focus on tabular representations, and use $\mathbf{V^\pi}\in\sR^{|\statespace|}$, $\mathbf{A^\pi}\in\sR^{|\statespace|\times|\actionspace|}$ to denote the value and the advantage functions, respectively, and use their functional forms for indexing.

Learning value functions is at the core of various policy optimization algorithms,
and one classical method is TD learning~\citep{sutton1988learning}, which updates the estimates through bootstrapping.
Specifically, one-step TD estimates the value function by sampling transition tuples $(s_t,a_t,s_{t+1})$ and updating the values by $V(s_t)\leftarrow V(s_t) + \alpha (r_t + \gamma V(s_{t+1}) - V(s_t))$, where $\alpha\in\sR$ is the learning rate.
An extension of this is multi-step TD, which samples multiple timesteps before updating the value via $V(s_t)\leftarrow V(s_t) + \alpha (r_t + \cdots + \gamma^k V(s_{t+k}) - V(s_t))$.
We denote this update by TD($k$)\footnote{This is not to be confused with TD($\lambda$).}.
%In the limit ($k\rightarrow\infty$), the method reduces to incremental Monte-Carlo (MC) methods.

\paragraph{Phased TD}
We presently consider the \emph{phased} setting~\citep{kearns1998finite}, which removes some of the complexities of TD leaning due to asynchronous updates and stochastic approximations.
In phased TD, value estimates are updated in \emph{phases}, where each phase consists of two steps:
(1) sampling $n$ $k$-step trajectories for each state: $\gD=\{\tau_{s,i}\}_{s\in\gS, i\in[1,\cdots,n]}$, where $\tau_{s,i} = (s, a_0^i, r_0^i, s_1^i, \cdots, s_k^i)$, %($n$ is assumed to be fixed throughout the paper unless otherwise stated)
and (2) updating the value of every state synchronously by
\begin{equation}\label{eqn:phased_td}
    V^{T+1}(s) \leftarrow \frac{1}{n} \sum_{i=1}^n \left(r_0^i + \cdots +\gamma^{k-1}r_{k-1}^i + \gamma^k V^T\left(s_k^i\right)\right),
\end{equation}
where $T$ denotes the phase.
It was shown that phased TD is analogous to \emph{TD learning with a fixed learning rate} under mild assumptions.
\citet{kearns2000bias} used this setting to analyze the bias-variance tradeoff of multi-step TD, and showed that the estimation error satisfies:
\begin{align*}
    \max_s|V^{T+1}(s) - V^\pi(s)|\coloneq\Delta_{T+1}\leq \underbrace{\left|\frac{1}{n} \sum_{i=1}^n \left(\sum_{t=0}^{k-1}\gamma^t r_t^i\right)  - \E\left[\left(\sum_{t=0}^{k-1}\gamma^t r_t\right)|s_0{=}s\right]\right|}_{\text{variance}} + \gamma^k \underbrace{\vphantom{\Bigg|}\Delta_T}_{\text{bias}},
\end{align*}
where the variance term accounts for the stochasticity from the sample rewards, while the bias term accounts for the error from bootstrapping.
%which accounts for the variance from sampled rewards, and a bias term $\gamma^k \Delta_{T-1}$, which accounts for the error from bootstrapping.
The authors then derived the following probabilistic bound (assume $\Delta_0 = 1$ and $|r(\cdot)| \leq 1$):
\begin{equation}\label{eqn:old_bound}
    \Delta_T\leq \frac{1-\gamma^{kT}}{1-\gamma}\sqrt{\frac{3\log(k/\delta)}{n}} + \gamma^{kT},
\end{equation}
with probability $1-\delta$.
Intuitively, increasing $k$ reduces the bias from bootstrapping at the cost of increased variances from the rewards.
However, this bias-variance decomposition fails to consider the variance from finite samples of $s_k^i$, and the variance from previous estimates $V^T$.
Furthermore, as $k\rightarrow\infty$ (MC estimation), this bound becomes vacuous even though the rewards are bounded.

\paragraph{Control Variate}
Control variate~\citep{Hammersley1964, asmussen2007stochastic} is a technique for reducing variances of MC methods.
We briefly review the basics of control variate in the one-dimensional setting, and refer the reader to~\citet{mcbook} for a more general treatment.

Recall that MC methods estimate $\mu_X=\E[X]$ by $\hat{\mu}_X = \frac{1}{n}\sum_{i=1}^n X_i$, where $X_i\overset{\mathrm{iid}}{\sim}X$.
Given another random variable $Y$, the control variate, that satisfies $\Cov(X,Y)\neq 0$ and has known $\E[Y]$ (assume $\E[Y]=0$, otherwise use $Z=Y-\E[Y]$).
Let $\hat{\mu}_{X,\lambda Y} = \frac{1}{n}\sum_{i=1}^n X_i + \lambda Y_i$, where $(X_i, Y_i)\overset{\mathrm{iid}}{\sim} (X,Y)$ and $\lambda$ is a tunable constant.
This new estimator remains unbiased, but has a different variance $\Var(\hat{\mu}_{X,\lambda Y}) = \Var(\hat{\mu}_X) + \frac{\lambda^2}{n}\Var(Y) + \frac{2\lambda}{n}\Cov(X, Y)$.
If we choose $\lambda = \lambda^* = -\frac{\Cov(X,Y)}{\Var(Y)}$, then 
$\Var(\hat{\mu}_{X,\lambda^* Y}) = \Var(\hat{\mu}_X) - \frac{\Cov(X,Y)^2}{n\Var(Y)} \leq \Var(\hat{\mu}_X)$,
which is never worse than the original estimator.
In practice, however, $\lambda^*$ is usually unknown, and estimated by $\hat{\lambda} = -\frac{\widehat{\Cov}(X,Y)}{\widehat{\Var}(Y)}$.
Interestingly, this estimator arises naturally from solving the following least squares:
\begin{equation}\label{eqn:racr}
    (\hat{\mu}_{X, \hat{\lambda}Y}, \hat{\lambda})=\argmin_{\theta,\lambda}\sum_{i=1}^n (\theta - x_i - \lambda y_i)^2.
\end{equation}
This approach is also known as regression-adjusted control variate, and can be readily generalized to more complex settings (e.g., multiple control variates).
The error of this estimator is:
\begin{align}
    \hat{\mu}_{X,\hat{\lambda} Y} - \mu_X  = \underbrace{(\hat{\lambda} - \lambda^*)\sum_{i=1}^n \frac{Y_i}{n}}_{O(1/n)} + \underbrace{\vphantom{\sum_{i=1}^n \frac{Y_i}{n}}\hat{\mu}_{X,\lambda^* Y} - \mu_X}_{O(1/\sqrt{n})}.
\end{align}
As both $\hat{\lambda} - \lambda^*$ and $\sum_{i=1}^n \frac{Y_i}{n}$ approach zero with rates $O(1/\sqrt{n})$, their product converges to 0 with rate $O(1/n)$.
Consequently, the second term dominates the error asymptotically, and $\hat{\mu}_{X,\lambda^* Y}$ can be seen as a large-sample approximation of $\hat{\mu}_{X,\hat{\lambda} Y}$.
Finally, we note that the estimator is, in general, biased since $Y_i$ can be correlated with $\hat{\lambda}$ (i.e., $\E[Y_i \hat{\lambda}]\neq 0$); however, the estimator remains consistent as the errors approach zero for $n\rightarrow\infty$.

\paragraph{Direct Advantage Estimation}
Direct Advantage Estimation (DAE)~\citep{pan2022direct} is a method for estimating the advantage function directly from sample trajectories.
Similar to multi-step TD, DAE can update values by bootstrapping previous estimates.
Specifically, DAE estimates the values by iteratively minimizing the following constrained least-squares:
\begin{gather}\label{eqn:loss_bs}
    \gL(\hat{A}, \hat{V})=\E_\pi
    \left[
        \left(
            \sum_{t=0}^{k-1}\gamma^{t}(r_{t} - \hat{A}_{t}) + \gamma^{k}V_\mathrm{tar}(s_{k})-\hat{V}(s_0)
        \right)^2
    \right]    
    \; \mathrm{s.t.}
    \sum_{a}\pi(a|s)\hat{A}(s,a)=0,
\end{gather}
where $\hat{A}_{t}=\hat{A}(s_t, a_t)$, and $V_\mathrm{tar}$ is the bootstrapping target.
It was shown that the minimizer of this objective can be viewed as multi-step estimates of the advantage function and the value function.
In practice, the expectation is replaced with an average over sample trajectories.
We note that, if we force $\hat{A}\equiv 0$ and only optimize with respect to $\hat{V}$, then DAE reduces to multi-step TD.
DAE demonstrated strong empirical performance in the deep RL setting; however, it remains unclear whether estimating the value function this way is beneficial compared to classical approaches.

\begin{table*}[!t]
    \caption{Variables names and their definitions. $i$ denotes the $i$th trajectory. $k$ and $n$ are constants unless otherwise stated. Note that $\pmat$ is a (row) vector of dimension $|\statespace|$.}
    \centering
    \begin{tabular}{ccc}
        \toprule
        Variable & Def. & Description \\ \hline
        $\mathbf{G}$ & $\frac{1}{n}\sum_{i=1}^n \sum_{t=0}^\infty \gamma ^t r_t^i$ &  Average of sample returns \\
        $\gD$ & $\left\{(s, a_0^i, r_0^i, s_1^i, \cdots, s_k^i)\right\}_{s\in\statespace, i=1,...,n}$ & $k$-step partial trajectories \\
        $\rmat$ & $\frac{1}{n}\sum_{i=1}^n\sum_{t=0}^{k-1} \gamma ^t r_t^i$ & Average of sample $k$-step rewards \\
        $\pmat$ & $\pmat_{s'}=\frac{1}{n}\sum_{i=1}^n\mathbb{I}(s_k^i=s')$  & Empirical $k$-step transition distribution \\
        \bottomrule
    \end{tabular}
    \label{tab:notations}
\end{table*}

\section{Variance of Monte Carlo (MC) and TD Learning}\label{sec:var_mctd}
We begin by showing that the variance of MC can be broken down into segments.
This will become useful when compared to multi-step TD, whose variance follows a similar form.

For clarity, we summarize the notations used in subsequent analyses in Table~\ref{tab:notations}.

\paragraph{Variance of MC}
Recall that MC estimates values via $V_{\mathrm{MC}}(s) = \mathbf{G}$.
The following lemma shows that its variance is equal to the sum of the variances of partial trajectories.
\begin{lemma}
$\Var(V_\mathrm{MC}(s))= \Var(\mathbf{G}) =
        \sum_{m=0}^\infty \gamma^{2km}\E\left[\left.\Var(\rmat + \gamma^k \pmat \vvec^\pi|s_0{=}s_{km})\right|s_0{=}s\right]$,
\label{lemma:var_mc}
\end{lemma}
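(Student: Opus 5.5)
Since $\mathbf{G}$ is an average of $n$ i.i.d. returns and each term $\rmat$, $\pmat$ is likewise an average of $n$ i.i.d. single-trajectory quantities, the factor $1/n$ appears identically on both sides. It therefore suffices to prove the identity for $n=1$, i.e.\ for a single trajectory return $G=\sum_{t\ge 0}\gamma^t r_t$ with $s_0=s$. In that case $\pmat\vvec^\pi = V^\pi(s_k)$.

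The approach is a martingale (Doob) decomposition along the block times $0,k,2k,\dots$. Define
\begin{equation*}
D_m = \sum_{t=km}^{km+k-1}\gamma^{t-km} r_t + \gamma^k V^\pi(s_{k(m+1)}) - V^\pi(s_{km}).
\end{equation*}
Telescoping gives
\begin{equation*}
G - V^\pi(s_0) = \sum_{m=0}^{\infty}\gamma^{km} D_m .
\end{equation*}
The tail term $\gamma^{kM}V^\pi(s_{kM})$ vanishes as $M\to\infty$, because rewards and hence $V^\pi$ are bounded on the finite MDP and $\gamma<1$.

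By the Markov property and the Bellman equation for $k$-step returns, $\E[D_m\mid \mathcal{F}_{km}]=0$, where $\mathcal{F}_{km}$ is the history up to $s_{km}$. Hence the $D_m$ are martingale differences. For $m<m'$, $D_m$ is $\mathcal{F}_{km'}$-measurable, so conditioning on $\mathcal{F}_{km'}$ shows $\E[D_mD_{m'}]=0$. Expanding the square then yields
\begin{equation*}
\Var(G)=\E\Big[\big(\textstyle\sum_m\gamma^{km}D_m\big)^2\Big]=\sum_m \gamma^{2km}\E[D_m^2].
\end{equation*}

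Next, by the Markov property and time-homogeneity, $\E[D_m^2\mid \mathcal{F}_{km}] = \Var\big(R + \gamma^k V^\pi(s_k)\mid s_0=s_{km}\big)$, where $R$ is the $k$-step discounted reward. This holds because the conditional mean of $R+\gamma^kV^\pi(s_k)$ given $s_0=s_{km}$ is exactly $V^\pi(s_{km})$. Taking the outer expectation over $s_{km}$ from $s_0=s$ gives the stated term. Reinserting the $1/n$ factor (variance of an average of $n$ i.i.d. copies), the right side matches $\Var(\rmat+\gamma^k\pmat\vvec^\pi\mid s_0=s_{km})$ read with $n$ fresh trajectories from $s_{km}$. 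This is how I interpret the notation: the $n$ trajectories are distinct, so the expectation over $s_{km}$ is taken per trajectory, and linearity handles the average.

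The main obstacle is justifying the infinite-sum interchanges: the telescoping limit and swapping the expectation with the infinite square expansion. I would handle this by truncating at $M$ blocks and applying the finite orthogonality identity. Since $|D_m|$ is uniformly bounded by $2R_{\max}/(1-\gamma)$, the remainder is $O(\gamma^{kM})$ in $L^2$, so the truncated identity passes to the limit. A second, minor issue is the interpretation of the notation with $n$ trajectories, resolved by the reduction to $n=1$ above.
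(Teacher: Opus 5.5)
Your proof is correct, but it is organized differently from the paper's. The paper works with all $n$ trajectories at once and applies the law of total variance a single time, conditioning on the first $k$-step segments $\gD$. Since $\E[\mathbf{G}\mid\gD]=\rmat+\gamma^k\pmat\vvec^\pi$, and the tails of the $n$ trajectories are conditionally independent given $\gD$, this gives the one-step recursion $\Var(V_\mathrm{MC}(s))=\Var(\rmat+\gamma^k\pmat\vvec^\pi\mid s_0{=}s)+\gamma^{2k}\E[\Var(V_\mathrm{MC}(s_k))\mid s_0{=}s]$. The lemma then follows by unrolling this recursion; the remainder $\gamma^{2kM}\E[\Var(V_\mathrm{MC}(s_{kM}))\mid s_0{=}s]$ vanishes because $\Var(V_\mathrm{MC}(\cdot))$ is bounded.

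Your Doob decomposition $G-V^\pi(s_0)=\sum_m\gamma^{km}D_m$ is the fully unrolled form of that same conditioning. For $n=1$, the two terms of the paper's first conditioning step are exactly $\E[D_0^2]$ and $\E[(\sum_{m\geq 1}\gamma^{km}D_m)^2]$. So the mathematical content is equivalent, and the difference lies in what each version makes visible.

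The paper's recursion is deliberately parallel to Lemma~\ref{lemma:var_td}, where $\vvec^\pi$ is replaced by $\bar{\vvec}^{T-1}_{\mathrm{TD}(k)}$ and the equality becomes an inequality. This one-step form is what gets reused in Theorem~\ref{thm:td_mc} and Corollary~\ref{cor:sample_efficiency}.

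Your route has three advantages. It handles the infinite sum more rigorously, via explicit $L^2$ truncation rather than an informal unrolling. It isolates the $1/n$ factor cleanly. It also exposes the orthogonal increments themselves: telescoping shows $D_m=\sum_{t=km}^{km+k-1}\gamma^{t-km}\bigl(A^\pi(s_t,a_t)+B^\pi(s_t,a_t,s_{t+1})\bigr)$. This ties the lemma directly to the return decomposition used in Section 4. Refining your filtration to include actions splits each increment into orthogonal $A^\pi$ and $B^\pi$ parts, which makes $\Var(V_{\sufmca}(s))\leq\Var(V_\mathrm{MC}(s))$ transparent.
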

This is a multi-step ($k\geq 1$) multi-sample ($n\geq 1$) generalization of the variance recursion derived by~\citet{sobel1982variance}, see Appendix~\ref{app:proof_lemma_1} for a proof.
As we will see, the variance of TD shares a similar structure, except $\vvec^\pi$ is replaced by bootstrapping targets.

\paragraph{Variance of Multi-Step TD}
We first rewrite the phased update (Equation~\ref{eqn:phased_td}) into:
\begin{align}
    \label{eqn:phased_td_update}
    V^T_{\mathrm{TD}(k)}(s) = \rmat + \gamma^k \pmat\vvec^{T-1}_{\mathrm{TD}(k)}.
\end{align}
Without loss of generality, we set $\vvec^0_{\mathrm{TD}(k)}\equiv 0$.
Note that $\vvec^T_{\mathrm{TD}(k)}$ is now a random vector for $T>0$ as the update involves random variables $\rmat$ and $\pmat$, and its expectation, denoted $\bar{\vvec}^T_{\mathrm{TD}(k)}$, is equal to:
\begin{align*}
    \bar{V}^T_{\mathrm{TD}(k)}(s) 
    = \E\left[\rmat\right] + \gamma^k \E\left[\pmat\right]\E\left[\vvec^{T-1}_{\mathrm{TD}(k)}\right]
    = \E\left[\left.\sum_{t=0}^{k-1}\gamma^t r_t + \gamma^k \bar{V}^{T-1}_{\mathrm{TD}(k)}(s_k)\right|s_0{=}s\right],
\end{align*}
and we recover the $k$-step Bellman update.
We emphasize that, while this expectation converges to $\vvec^\pi$ in the limit ($T\rightarrow\infty$), $\vvec^T_{\mathrm{TD}(k)}$ in general does \emph{not}.
This is due to the finite-sample variance, which causes the estimates to oscillate around $\vvec^\pi$.
We now analyze this variance and show that, similar to Lemma~\ref{lemma:var_mc}, the upper bound of the variance of TD also satisfies a recurrence relation.
\begin{lemma}\label{lemma:var_td}
    \begin{equation}
    \Var(V^T_{\mathrm{TD}(k)}(s)) \leq  
    \Var\left(\left. \rmat + \gamma^k\pmat\bar{\vvec}^{T-1}_{\mathrm{TD}(k)}\right|s_0{=}s\right) + \gamma^{2k}\E_{s_k}\left[\left.\Var\left(V^{T-1}_{\mathrm{TD}(k)}(s_k)\right)\right|s_0{=}s\right].    
    \end{equation}
\end{lemma}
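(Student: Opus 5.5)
Here $\rmat,\pmat$ are the phase-$T$ samples for the fixed start state $s$, and $\vvec^{T-1}$ is built from earlier, independent phases. I will use the law of total variance, conditioning on the fresh phase-$T$ samples $(\rmat,\pmat)$; write $V^{T-1}$ for $V^{T-1}_{\mathrm{TD}(k)}$. Since $\vvec^{T-1}$ is independent of $(\rmat,\pmat)$, $\E[V^T(s)\mid\rmat,\pmat]=\rmat+\gamma^k\pmat\bar{\vvec}^{T-1}$. The ``variance of the conditional mean'' term is therefore exactly the first term of the bound, $\Var(\rmat+\gamma^k\pmat\bar{\vvec}^{T-1}\mid s_0{=}s)$.

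The remaining term is the expected conditional variance,
\begin{equation*}
\E\left[\Var(V^T(s)\mid\rmat,\pmat)\right]=\gamma^{2k}\,\E\left[\pmat\,\Sigma^{T-1}\pmat^\top\right],
\end{equation*}
where $\Sigma^{T-1}$ is the covariance matrix of the random vector $\vvec^{T-1}$. Here lies the subtlety: $V^{T-1}$ at different states need not be independent, since all share the bootstrapped targets from phase $T-2$. So I will not try to diagonalize $\Sigma^{T-1}$. Instead I will bound the quadratic form by convexity. Because $\pmat$ is a probability vector,
\begin{equation*}
\Var(\pmat\vvec^{T-1}\mid\pmat)\le \sum_{s'}\pmat_{s'}\Var(V^{T-1}(s')).
\end{equation*}
This follows from Jensen applied to the centered variable, $(\sum_{s'}\pmat_{s'}X_{s'})^2\le\sum_{s'}\pmat_{s'}X_{s'}^2$ with $X=\vvec^{T-1}-\bar{\vvec}^{T-1}$, or equivalently from Cauchy--Schwarz on the covariances, $\Cov(X_{s'},X_{s''})\le \sigma_{s'}\sigma_{s''}$, and then $(\sum\pmat_{s'}\sigma_{s'})^2\le\sum\pmat_{s'}\sigma_{s'}^2$.

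Finally I take the expectation over $\pmat$ and use $\E[\pmat_{s'}]=\Pr(s_k=s'\mid s_0=s)$. This turns $\gamma^{2k}\sum_{s'}\E[\pmat_{s'}]\Var(V^{T-1}(s'))$ into $\gamma^{2k}\E_{s_k}[\Var(V^{T-1}(s_k))\mid s_0{=}s]$. Summing the two pieces gives the lemma. The main obstacle I anticipate is the correlation across states of $\vvec^{T-1}$; the convexity step is exactly what sidesteps it, and it also explains why the statement is only an inequality rather than the equality of Lemma~\ref{lemma:var_mc}. I also need to state carefully that phase-$T$ samples are drawn independently of all previous phases, which is what justifies the conditional-mean computation.
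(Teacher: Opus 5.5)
Your proposal is correct and follows the paper's proof step for step. You apply the law of total variance conditioned on the fresh phase-$T$ data: you condition on $(\rmat,\pmat)$ and the paper on $\gD$, which is equivalent here since $\vvec^{T-1}_{\mathrm{TD}(k)}$ is independent of the phase-$T$ samples. You then use the bound $\Var(\pmat\vvec^{T-1}_{\mathrm{TD}(k)}\mid\gD)\le\pmat\Var(\vvec^{T-1}_{\mathrm{TD}(k)})$, and finally $\E[\pmat]$ gives the $k$-step state distribution. Your Jensen/Cauchy--Schwarz justification of that middle inequality is a useful detail that the paper states without proof.
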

\begin{proof}
\begin{align}
    &\Var(V^T_{\mathrm{TD}(k)}(s)) = \Var\left(\left. \rmat + \gamma^k \pmat\vvec^{T-1}_{\mathrm{TD}(k)}\right|s_0{=}s\right)\\
    = &\Var\left(\left.\E\left[\left. \rmat + \gamma^k \pmat\vvec^{T-1}_{\mathrm{TD}(k)}\right|\gD\right]\right|s_0{=}s\right) 
     +\E\left[\left.\Var\left(\left.\rmat + \gamma^k \pmat\vvec^{T-1}_{\mathrm{TD}(k)}\right|\gD\right)\right|s_0{=}s\right] \\
    =& \Var\left(\left. \rmat + \gamma^k\pmat\bar{\vvec}^{T-1}_{\mathrm{TD}(k)}\right|s_0{=}s\right)
     +\gamma^{2k}\E\left[\left.\Var\left(\left.\pmat\vvec^{T-1}_{\mathrm{TD}(k)}\right|\gD\right)\right|s_0{=}s\right] \\
    \leq& \Var\left(\left. \rmat + \gamma^k\pmat\bar{\vvec}^{T-1}_{\mathrm{TD}(k)}\right|s_0{=}s\right)
     +\gamma^{2k}\E\left[\left.\pmat\Var\left(\vvec^{T-1}_{\mathrm{TD}(k)}\right)\right|s_0{=}s\right]\\
    =&\Var\left(\left. \rmat + \gamma^k\pmat\bar{\vvec}^{T-1}_{\mathrm{TD}(k)}\right|s_0{=}s\right) 
     + \gamma^{2k}\E_{s_k}\left[\left.\Var\left(V^{T-1}_{\mathrm{TD}(k)}(s_k)\right)\right|s_0{=}s\right].
\end{align}    
\end{proof}
If we expand this recursion, then
\begin{align}
    \Var(V^T_{\mathrm{TD}(k)}(s)) \leq 
    \sum_{m=0}^{T-1}\gamma^{2km} \E\left[\left.\Var\left(\left.\rmat + \gamma^k\pmat\bar{\vvec}^{T-1-m}_{\mathrm{TD}(k)}\right|s_0{=}s_{km}\right)\right|s_0{=}s\right],
\end{align}
and we arrive at a similar expression to the MC estimator (Lemma~\ref{lemma:var_mc}). In fact, we get, in the limit:
\begin{theorem}
    \label{thm:td_mc}
    $\lim_{T\rightarrow\infty}\Var(V^T_{\mathrm{TD}(k)}(s))\leq \Var(V_\mathrm{MC}(s))$.
\end{theorem}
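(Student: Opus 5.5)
We start from the unrolled form of Lemma~\ref{lemma:var_td} stated just above:
\begin{align*}
    \Var(V^T_{\mathrm{TD}(k)}(s)) \leq
    \sum_{m=0}^{T-1}\gamma^{2km}\, \E\left[\left.\Var\left(\left.\rmat + \gamma^k\pmat\bar{\vvec}^{T-1-m}_{\mathrm{TD}(k)}\right|s_0{=}s_{km}\right)\right|s_0{=}s\right].
\end{align*}
We then compare it term by term with the series in Lemma~\ref{lemma:var_mc}. The two expressions differ only in two ways. First, the MC series has $\vvec^\pi$ where TD has $\bar{\vvec}^{T-1-m}_{\mathrm{TD}(k)}$. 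Second, the MC series is infinite while the TD bound is truncated at $T$. The plan is to take $\limsup_{T\to\infty}$ of the right-hand side and show that it equals the MC series. This gives the claim, interpreting the limit on the left as a $\limsup$, or as a limit whenever it exists.

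\textbf{Step 1 (convergence of the mean targets).} The mean iterates satisfy the $k$-step Bellman recursion $\bar{\vvec}^{T} = \E[\rmat] + \gamma^k \E[\pmat]\bar{\vvec}^{T-1}$, which is a $\gamma^k$-contraction in sup norm. Hence $\|\bar{\vvec}^{T}_{\mathrm{TD}(k)} - \vvec^\pi\|_\infty \leq \gamma^{kT}\|\vvec^\pi\|_\infty \to 0$. The inner conditional variance $f_s(\mathbf{v}) := \Var(\rmat + \gamma^k \pmat \mathbf{v} \mid s_0{=}s)$ is a quadratic, hence continuous, function of $\mathbf{v}$ on a finite-dimensional space. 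So for each fixed $m$ and each $s_{km}$, $f_{s_{km}}(\bar{\vvec}^{T-1-m}) \to f_{s_{km}}(\vvec^\pi)$ as $T\to\infty$. Because $\statespace$ is finite, the outer expectation over $s_{km}$ preserves this convergence.

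\textbf{Step 2 (uniform domination to interchange limit and sum).} We need a bound on $f_{s'}(\bar{\vvec}^{j})$ that is uniform in $j$ and $s'$. Rewards are bounded, say $|r|\leq R_{\max}$, and $\|\bar{\vvec}^j\|_\infty \leq \sum_t \gamma^t R_{\max} = R_{\max}/(1-\gamma)$ for all $j$ by induction from $\bar{\vvec}^0=0$. Therefore $\rmat + \gamma^k\pmat\bar{\vvec}^j$ is bounded by $C=R_{\max}/(1-\gamma)$, and its variance is at most $C^2$. Each term of the sum is then dominated by $\gamma^{2km}C^2$, which is summable in $m$. Extending the finite sum to an infinite one with zero terms for $m\geq T$, dominated convergence for series (Tannery's theorem) gives
\begin{align*}
    \lim_{T\to\infty}\sum_{m=0}^{T-1}\gamma^{2km}\E\left[f_{s_{km}}(\bar{\vvec}^{T-1-m})\mid s_0{=}s\right]
    = \sum_{m=0}^{\infty}\gamma^{2km}\E\left[f_{s_{km}}(\vvec^\pi)\mid s_0{=}s\right],
\end{align*}
and by Lemma~\ref{lemma:var_mc} the right-hand side is exactly $\Var(V_{\mathrm{MC}}(s))$.

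The main obstacle is Step 2, because the index $T-1-m$ couples the summation index with the limit variable. Pointwise convergence alone is not enough; the uniform bound together with the geometric weights is what makes the interchange legitimate. A secondary subtlety is that $\lim_{T}\Var(V^T_{\mathrm{TD}(k)}(s))$ may need to be shown to exist. If it does not, the statement should be read with $\limsup$, which is what the argument proves.
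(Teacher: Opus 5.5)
Your argument for the inequality is correct and follows the paper's route: unroll Lemma~\ref{lemma:var_td}, get termwise convergence from $\bar{\vvec}^{T}_{\mathrm{TD}(k)}\to\vvec^\pi$, and pass the limit through the sum using a uniform bound and the geometric weights $\gamma^{2km}$. The paper's $\epsilon$/$N$/$M$ splitting is a hand-rolled Tannery argument. Your explicit constant $C=R_{\max}/(1-\gamma)$, which also controls $\bar{\vvec}^{j}_{\mathrm{TD}(k)}$, is if anything cleaner.

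What is missing is the other half of the statement: that $\lim_{T\to\infty}\Var(V^T_{\mathrm{TD}(k)}(s))$ exists. You only prove $\limsup_{T}\Var(V^T_{\mathrm{TD}(k)}(s))\leq\Var(V_{\mathrm{MC}}(s))$. Proposing to ``read the statement as a $\limsup$'' weakens the theorem rather than proving it. The paper settles existence first. Write $(\rmat_T,\pmat_T)$ for the phase-$T$ reward vector and empirical transition matrix. The update $\vvec^T_{\mathrm{TD}(k)}=\rmat_T+\gamma^k\pmat_T\vvec^{T-1}_{\mathrm{TD}(k)}$ is then a random affine iteration whose i.i.d.\ maps are sup-norm contractions with bounded offsets. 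By Burton and R\"osler, $\vvec^T_{\mathrm{TD}(k)}$ converges in $W_\infty$, so all its moments converge.

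You can supply this without the citation, in either of two ways.

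\textbf{Backward iterate.} Because phases are i.i.d.\ and $\vvec^0_{\mathrm{TD}(k)}=0$, reversing the order of the phases shows that $\vvec^T_{\mathrm{TD}(k)}$ has the same law as $\sum_{j=0}^{T-1}\gamma^{kj}\pmat_1\cdots\pmat_j\rmat_{j+1}$. This sum converges surely: its $T$-th increment is bounded in sup norm by $\gamma^{k(T-1)}\sup\|\rmat\|_\infty$, since products of row-stochastic matrices have unit $\infty$-norm. It is also uniformly bounded, so bounded convergence gives convergence of the mean and second moments, hence of the variance.

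\textbf{Second-moment recursion.} Let $S^T=\E[\vvec^T_{\mathrm{TD}(k)}(\vvec^T_{\mathrm{TD}(k)})^\top]$. It satisfies $S^T=\gamma^{2k}\E[\pmat S^{T-1}\pmat^\top]+c_T$. By independence of the phase-$T$ samples from $\vvec^{T-1}_{\mathrm{TD}(k)}$, $c_T$ depends only on $\bar{\vvec}^{T-1}_{\mathrm{TD}(k)}$, so it converges. The homogeneous map is a $\gamma^{2k}$-contraction in the entrywise max norm, so $S^T$ converges.

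With either addition your proof is complete.
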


\begin{figure}[t]
    \centering
    \begin{subfigure}[t]{0.48\textwidth}
        \centering
        \begin{tikzpicture}[scale=1]
        
        \node[circle, minimum size=7mm] (s0) at (0, 0) {$s_0$};
        \node[circle, minimum size=7mm] (sk1) at (-.7, .6) {$s_k^1$};
        \node[circle, minimum size=7mm] (sk2) at (1.5, .6) {$s_k^2$};
        \node[circle, minimum size=7mm] (sk3) at (1.5, -.3) {$s_k^3$};
        
        \draw[->, out=30, in=0] (s0) to (sk1);
        \draw[->, out=180, in=0, dashed] (sk1) to (-2.5, 0.7);
        \draw[->, out=180, in=0, dashed] (sk1) to (-1.5, 1.8);
        \draw[->, out=180, in=0, dashed] (sk1) to (-2, 1.7);
        \draw[->, out=30, in=190] (s0) to (sk2);
        \draw[->, out=100, in=-90, dashed] (sk2) to (.6, 1.6);
        \draw[->, out=100, in=-90, dashed] (sk2) to (1.3, 2);
        \draw[->, out=100, in=-90, dashed] (sk2) to (2.3, 1.7);
        \draw[->, out=30, in=160] (s0) to (sk3);
        \draw[->, out=0, in=160, dashed] (sk3) to (3, .5);
        \draw[->, out=0, in=140, dashed] (sk3) to (3, 0);
        \draw[->, out=0, in=170, dashed] (sk3) to (3, -.5);
        
        \end{tikzpicture}    
        \subcaption{Independent bootstrapping states}
    \end{subfigure}    
    \begin{subfigure}[t]{0.48\textwidth}
        \centering    
        \begin{tikzpicture}[scale=1]
        
        % \def\n{5}
    
        % \foreach \i in {1, ..., \n}{
        %     \node [draw, circle] (N\i) at (\i * -360 / \n + 360 / \n:2) {\i};
        % }
        % \pgfmathtruncatemacro\m{\n-1};
        % \foreach \i in {1, ..., \m }{
        %     \pgfmathtruncatemacro\k{\i+1};
        %     \draw (N\i) edge [->, bend left] node [font=\small, fill=white] {-1} (N\k);
        %     \draw (N\i) edge [->, bend right] node [font=\small, fill=white] {1} (N\k);    
        % }   
    
        \node[circle, minimum size=7mm] (s0) at (0, 0) {$s_0$};
        \node[circle, minimum size=7mm] (ski) at (1.5, .8) {$s_k^i$};
        
        \draw[->, out=10, in=-50] (s0) to (ski);
        \draw[->, out=10, in=230] (s0) to (ski);
        \draw[->, out=10, in=190] (s0) to (ski);
        \draw[->, out=30, in=190, dashed] (ski) to (2.7, 1.6);
        \draw[->, out=30, in=200, dashed] (ski) to (2.8, 1.3);
        \draw[->, out=30, in=170, dashed] (ski) to (2.9, 0.8);
        
        \end{tikzpicture}
        \subcaption{Correlated bootstrapping states}
    \end{subfigure}
    \caption{Illustration of how TD can reduce variance. Solid and dashed arrows represent trajectories collected in the current phase and the previous phase, respectively.
    (a) $s_k^i$ are independent, and averaging their values effectively aggregates over a larger pool of trajectories. 
    (b) $s_k^i$ are all the same, and averaging their values provides no variance reduction, as the effective number of trajectories remains the same.}
    \label{fig:td_correlation}
\end{figure}

See Appendix~\ref{app:proof_thm1} for a proof and a discussion of how this can be extended to the asynchronous IID setting.
This shows that, asymptotically, TD is no worse than MC, independent of $k$; however, it also suggests that TD, in the worst case, can suffer from the same variance as MC.

\paragraph{When does the equality hold?}
Let us examine the only inequality used in the derivation, namely, $\Var\left(\left.\pmat\vvec^{T-1}_{\mathrm{TD}(k)}\right|\gD\right) \leq \pmat\Var\left(\vvec^{T-1}_{\mathrm{TD}(k)}\right)$.
Recall that the left-hand side is simply the variance of the average of the bootstrap values, namely $\Var\left(\left.\frac{1}{n}\sum_iV^{T-1}_{\mathrm{TD}(k)}(s_k^i)\right|\gD\right)$, and this inequality becomes an equality when the value estimates have correlation 1 (e.g., $s_k^i$ are the same for all $i$).
This suggests that one way bootstrapping reduces variance is by effectively aggregating over diverse states with weak dependencies between their value estimates, as illustrated in Figure~\ref{fig:td_correlation}.
In Section~\ref{sec:exp}, we also construct toy examples to illustrate this effect.

\paragraph{A remark on sample efficiency}
It is not immediately clear whether the comparison in Theorem~\ref{thm:td_mc} is fair in terms of sample efficiency as the result is asymptotic.
Here, we compare two settings with the same number of samples: (1) $T$ phases with $k$-step updates, and (2) $1$ phase with $kT$-step update.
Both settings require $nkT |\statespace|$ number of samples, and share the same expected value given the same initialization.
However, their variances satisfy the following ordering:
\begin{corollary}
    \label{cor:sample_efficiency}
    For $\tau \geq 0$, we have
    $\Var(V^{\tau+T}_{\mathrm{TD}(k)}(s)|\vvec^{\tau}_{\mathrm{TD}(k)}{=}\vvec) \leq 
        \Var(V^{\tau+1}_{\mathrm{TD}(kT)}(s)|\vvec^{\tau}_{\mathrm{TD}(kT)}{=}\vvec)$.
\end{corollary}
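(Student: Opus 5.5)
The plan is to compare the two variances by writing both as sums of per-segment conditional variances, as in Lemma~\ref{lemma:var_mc} and Lemma~\ref{lemma:var_td}, and to show that the TD($k$) sum is dominated term by term by the TD($kT$) sum.

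\textbf{Step 1: the one-phase TD($kT$) estimator.} Conditioned on $\vvec^{\tau}_{\mathrm{TD}(kT)}=\vvec$, the estimator is $\frac1n\sum_i\big(\sum_{t<kT}\gamma^t r^i_t+\gamma^{kT}V(s^i_{kT})\big)$. This is an average of $n$ i.i.d.\ truncated returns with terminal value $\vvec$. I would split each $kT$-step trajectory into $T$ segments of length $k$ and apply the law of total variance repeatedly, conditioning on the states at times $k,2k,\dots$. This is exactly the argument behind Lemma~\ref{lemma:var_mc}, but with $\vvec^\pi$ replaced by the finite-horizon Bellman iterates. It gives the exact identity
\[
\Var\big(V^{\tau+1}_{\mathrm{TD}(kT)}(s)\,\big|\,\vvec\big)=\sum_{m=0}^{T-1}\gamma^{2km}\E\Big[\Var\big(\rmat+\gamma^k\pmat\bar{\vvec}^{T-1-m}\,\big|\,s_0{=}s_{km}\big)\,\Big|\,s_0{=}s\Big].
\]
Here $\bar{\vvec}^{j}$ denotes $j$ applications of the $k$-step Bellman operator to $\vvec$. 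The $1/n$ factor is common to every term: each conditional variance of an $n$-sample average is $1/n$ times the single-trajectory variance.

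\textbf{Step 2: the $T$-phase TD($k$) estimator.} Starting from $\vvec^\tau=\vvec$, the mean iterates are again $\bar{\vvec}^{j}$, because the $\rmat$ and $\pmat$ drawn in each phase are independent of the earlier estimates. So the unrolled form of Lemma~\ref{lemma:var_td}, with the initialization shifted to phase $\tau$, gives
\[
\Var\big(V^{\tau+T}_{\mathrm{TD}(k)}(s)\,\big|\,\vvec\big)\le\sum_{m=0}^{T-1}\gamma^{2km}\E\Big[\Var\big(\rmat+\gamma^k\pmat\bar{\vvec}^{T-1-m}\,\big|\,s_0{=}s_{km}\big)\,\Big|\,s_0{=}s\Big].
\]
At the last step the recursion terminates with zero variance, since $\vvec^\tau$ is fixed.

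\textbf{Step 3: matching and the main obstacle.} The two right-hand sides are literally the same expression, which gives the corollary. The main obstacle is Step 1: the MC-style decomposition must hold exactly, not just as a bound, for $n$-sample averages over $kT$-step trajectories. I would handle this in two stages.
\begin{itemize}
\item First prove it for $n=1$, using the Markov property. Conditioned on $s_{km}$, the remainder of the trajectory is independent of its past, so the cross terms vanish.
\item Then divide by $n$, using independence across the $n$ trajectories.
\end{itemize}
One further check is needed. In the TD bound, the expectation over $s_{km}$ in the $m$-th term comes from $m$ successive single-sample state draws, one in each phase. The compounded $\E[\pmat\,\cdot\,]$ operations therefore give the $km$-step state distribution, which is the same law as along a single long trajectory. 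This confirms that the outer expectations in the two displays agree.
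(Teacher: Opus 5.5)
Your proposal is correct and uses essentially the same argument as the paper. The paper runs an induction on $T$. At each step it peels one $k$-step segment off the TD($kT$) return with the law of total variance and compares it to the one-phase recursion of Lemma~\ref{lemma:var_td}, using $\bar{\vvec}^{T-1}_{\mathrm{TD}(k)}=\bar{\vvec}^{1}_{\mathrm{TD}(k(T-1))}$. That is the inductive form of your fully unrolled, term-by-term comparison.
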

See Appendix~\ref{app:proof_cor1} for a proof.
This shows that short horizon updates incur less variance than a single long horizon update given the same number of interactions and bootstrapping targets.

\section{Control Variate and the Advantage Function}
We now shift our focus to the second main contribution of the present work, namely, how to use the advantage function to reduce the variance of value estimations.

We begin by considering the $\pi$-centered function class $F_\pi = \{f| \E_\pi\left[f(s,a)|s\right]=0\;\forall s\}$, which has the following property (given $f\in F_\pi$): $\E_\pi\left[\sum_{t=0}^\infty \gamma^t \left(r_t - f(s_t,a_t)\right)\right] = \E_\pi\left[\sum_{t=0}^\infty \gamma^t r_t\right]$.
In other words, introducing $f$ does not bias the MC estimate, and $f$ can be seen as a control variate.
A natural question is, then, what would be the optimal choice of $f^*$ that minimizes the variance, that is, $f^* = \argmin_{f\in F_\pi}\Var\left(\sum_{t=0}^\infty \gamma^t\left(r_t - f(s_t,a_t)\right)\right)$.
\citet{pan2022direct} proved that the advantage function $A^\pi$ is the unique minimizer under mild assumptions, and used it as a way to estimate $A^\pi$.
Now, if $A^\pi$ is known, then we can construct a new estimator via:
\begin{align}\label{eqn:dae_est}
    V_{\sufmca}(s) = \frac{1}{n} \sum_{i=1}^n \left(\sum_{t=0}^\infty \gamma ^t \left(r_t^i-A^\pi(s_t^i, a_t^i)\right)\right).
\end{align}
One may wonder to what extent can this control variate reduce the variance.
To answer this, we use the return decomposition by~\citet{pan2024skill}:
\begin{equation}
    \sum_{t=0}^\infty \gamma^t r_t = V^\pi(s_0) + \sum_{t=0}^\infty \gamma^t (A^\pi(s_t, a_t) + B^\pi(s_t, a_t, s_{t+1})),
\end{equation}
where $B^\pi(s_t, a_t, r_t,s_{t+1}) = r_t + \gamma V^\pi(s_{t+1}) - \E[r + \gamma V^\pi(s')|s_t, a_t]$ quantifies how much of the return is caused by stochastic transitions\footnote{We use a slightly more general definition to incorporate stochastic rewards}.
If $B^\pi\equiv 0$ (e.g., the environment is deterministic), then:
\begin{equation}\label{eqn:decomp_det}
    \sum_{t=0}^\infty \gamma^t r_t = V^\pi(s_0) + \sum_{t=0}^\infty \gamma^t A^\pi(s_t, a_t).
\end{equation}
Combine this with Equation~\ref{eqn:dae_est}, we have $\Var(V_{\sufmca}(s)) = 0$, meaning that $A^\pi$ can fully explain away the variance of MC in this case.
For more general cases, $B^\pi$ is required to account for the variance caused by stochastic transitions, and it remains open whether $B^\pi$ can be easily estimated in model-free settings.
As such, we focus on the advantage function in the present work.

In practice, $A^\pi$ is rarely known a priori, so we have to estimate both $V^\pi$ and $A^\pi$ simultaneously.
Next, we show that DAE~\citep{pan2022direct} can be seen as a type of regression-adjusted control variate, which achieves this.

\subsection{Direct Advantage Estimation and Control Variate Regression}\label{sec:dae_and_vr}
DAE estimates $V^\pi$ and $A^\pi$ by solving a constrained least-square problem (Equation~\ref{eqn:loss_bs}).
For the present work, we focus only on the value estimates and treat $\hat{A}$ as nuisance parameters.
This allows us to remove the constraint and reformulate the objective into:
\begin{gather*}\label{eqn:loss_bs_reform}
    \gL_T(\hat{A}, \hat{V}) = 
    \E_\pi
    \left[
        \left(
            \sum_{t=0}^{k-1}\gamma^{t}
            \left(
                r_{t} - \breve{A}_t
                %\left(
                %    \hat{A}_t - \sum_{a}\pi(a|s_t)\hat{A}(s_t, a)
                %\right)
            \right) +
            \gamma^{k}V^T_{\mathrm{DAE}(k)}(s_{k}) - 
            \hat{V}(s)
        \right)^2
    \right], 
\end{gather*}
where $\breve{A}_t = \hat{A}(s_t, a_t) - \sum_a \pi(a|s_t) \hat{A}(s_t, a)$ (this parametrization ensures that $\sum_a \pi(a|s) \breve{A}(s,a) = 0$).
Under this formulation, the minimizer of $\hat{A}$ may no longer be unique, but the minimizer of $\hat{V}$ remains unchanged.
Let us now consider DAE in the phased setting, where the objective becomes:
\begin{equation*}
    \sum_{i=1}^n
    \left(
        \sum_{t=0}^{k-1}\gamma^{t}
        \left(
            r_{t}^i - \breve{A}_t^i
            %\left(
            %    \hat{A}_t - \sum_{a}\pi(a|s_t)\hat{A}(s_t, a)
            %\right)
        \right) +
        \gamma^{k}V^T_{\mathrm{DAE}(k)}(s_{k}^i) - 
        \hat{V}(s)
    \right)^2.
\end{equation*}
Let $(\vvec^{T+1}_{\mathrm{DAE}(k)}, \avec^{T+1})$ be a minimizer of this objective, and $\mmat\in\sR^{|\statespace|\times|\actionspace|}$ be the following:
\begin{equation*}
    \mmat_{(s,a)} =\frac{1}{n}\sum_{i=1}^n \sum_{t=0}^{k-1} \gamma^t 
    \left(
        \mathbb{I}(s^i_t{=}s, a^i_t{=}a) - \pi(a|s)\mathbb{I}(s^i_t{=}s)
    \right),
\end{equation*}
which compares the empirical occupancy measure to its expectation over the actions, then
\begin{equation}
    \label{eqn:phased_dae_update}
    V^{T+1}_{\mathrm{DAE}(k)}(s) = \rmat + \gamma^k \pmat\vvec^{T}_{\mathrm{DAE}(k)} - \mmat\avec^{T+1}.
\end{equation}
Note that, this update rule differs from the TD update (cf. Equation~\ref{eqn:phased_td_update}) only by the term $\mmat\avec^{T+1}$.
Comparing this to Equation~\ref{eqn:racr}, we see that DAE is a case of regression-adjusted control variate, where $\mmat$ is the control variate with $\E[\mmat]=0$, and $\avec^T$ is the corresponding coefficients. 
Additionally, recall that this regression estimator ($\vvec^T_{\mathrm{DAE}(k)}$) behaves similarly to the one with optimal control variate coefficients ($\vvec^T_{\mathrm{DAE}^*(k)}$) under the large-sample approximation (cf. Section~\ref{sec:background}).
Here, we focus on $\vvec^T_{\mathrm{DAE}^*}$ and leave it for future work to analyze the small-sample bias due to control variates.\footnote{The large-sample approximation is common in the literature~\citep{lin2013agnostic, mcbook, davidson1992regression}, since the small-sample bias from imperfect control variate coefficients is often much smaller than the variance.}
The following result shows the asymptotic behavior of DAE:
\begin{theorem}
    \label{thm:dae}
    $\limsup_{T\rightarrow\infty} \Var(V^{T}_{\mathrm{DAE}^*(k)}(s)) 
    \leq \Var(V_{\sufmca}(s)) $
\end{theorem}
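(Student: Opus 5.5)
We mirror the proof of Theorem~\ref{thm:td_mc}, with the reward term replaced by its advantage-corrected version. First we need to say what $\mathrm{DAE}^*(k)$ is. We take it to be the update
\[
V^{T+1}_{\mathrm{DAE}^*(k)}(s)=\rmat-\mmat\avec^\pi+\gamma^k\pmat\vvec^{T}_{\mathrm{DAE}^*(k)} .
\]
Here the empirical coefficients $\avec^{T+1}$ are replaced by the fixed coefficients $\avec^\pi$. Writing $\tilde{\rmat}=\frac{1}{n}\sum_{i}\sum_{t=0}^{k-1}\gamma^t\bigl(r_t^i-A^\pi(s_t^i,a_t^i)\bigr)$, we have $\rmat-\mmat\avec^\pi=\tilde{\rmat}$, because $\sum_a\pi(a|s)A^\pi(s,a)=0$. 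The estimator is therefore exactly phased TD$(k)$ run on the modified reward $\tilde r=r-A^\pi$. Since $\E_\pi[A^\pi(s_t,a_t)\mid s_t]=0$, this modified MDP has the same value function $\vvec^\pi$.

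One might want coefficients that are optimal at each phase, given the bootstrapping target. In that case we would first argue that this choice has variance no larger than the choice $\avec^\pi$, since the optimal control-variate coefficient can only reduce variance, by the one-dimensional computation in Section~\ref{sec:background}. The theorem is only an upper bound, so it then suffices to bound the estimator that uses $\avec^\pi$. Establishing this domination phase by phase is the main subtlety: the optimal coefficient depends on $\vvec^{T}$, and one must check that the resulting inequality propagates through the recursion. We would handle this by conditioning on $\vvec^T$, which is independent of the current phase's data $\gD$, so the per-phase optimality holds conditionally.

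With the update in TD form, Lemma~\ref{lemma:var_td} applies verbatim with $\rmat$ replaced by $\tilde{\rmat}$. Unrolling it gives
\[
\Var(V^T_{\mathrm{DAE}^*(k)}(s))\le\sum_{m=0}^{T-1}\gamma^{2km}\E\Bigl[\Var\bigl(\tilde{\rmat}+\gamma^k\pmat\bar{\vvec}^{T-1-m}\mid s_0{=}s_{km}\bigr)\Bigm| s_0{=}s\Bigr].
\]
The expectations $\bar{\vvec}^{T}$ follow the $k$-step Bellman recursion of the modified MDP and converge to $\vvec^\pi$ geometrically. Each summand is uniformly bounded because the MDP is finite and rewards and advantages are bounded, and the weights $\gamma^{2km}$ are summable. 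Dominated convergence for series then gives
\[
\limsup_{T\to\infty}\Var(V^T_{\mathrm{DAE}^*(k)}(s))\le\sum_{m\ge0}\gamma^{2km}\E\bigl[\Var(\tilde{\rmat}+\gamma^k\pmat\vvec^\pi\mid s_0{=}s_{km})\mid s_0{=}s\bigr].
\]
This is the same limiting argument as in Theorem~\ref{thm:td_mc}.

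Finally, we apply Lemma~\ref{lemma:var_mc} to the modified MDP with reward $\tilde r$. Its MC estimator is exactly $V_{\sufmca}(s)$ and its value function is still $\vvec^\pi$, so the lemma says the right-hand side above equals $\Var(V_{\sufmca}(s))$. This concludes the proof. The lemma's proof only uses the Markov property and the law of total variance, so it applies even though $\tilde r$ depends on the action; it is fine as long as the reward is a deterministic function of $(s,a)$.
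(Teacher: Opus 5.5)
Your proposal is correct and follows essentially the paper's argument: bound each DAE$^*$ step by the step with coefficients $\avec^\pi$, note that $\rmat-\mmat\avec^\pi$ is the $k$-step reward of the MDP with reward $r-A^\pi$ (same $\vvec^\pi$, and its MC estimator is $V_{\sufmca}$), and rerun the TD-versus-MC argument of Theorem~\ref{thm:td_mc} together with Lemma~\ref{lemma:var_mc}. The one point to phrase carefully is the propagation you flag. What carries through the recursion is the one-step bound $\Var(V^T_{\mathrm{DAE}^*(k)}(s))\le\Var(\tilde{\rmat}+\gamma^k\pmat\vvec^{T-1}_{\mathrm{DAE}^*(k)}\mid s_0{=}s)$, fed into Lemma~\ref{lemma:var_td} with the DAE$^*$ iterates themselves as bootstrap (their means obey the same Bellman recursion, so your unrolled display is right), and not a sequence-level domination of DAE$^*$ by the fixed-$\avec^\pi$ chain, which per-state induction cannot deliver because the cross-state covariances of the previous iterate enter the next step.
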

See Appendix~\ref{app:proof_thm2} for a proof.
Compared to Theorem~\ref{thm:td_mc}, this shows that DAE enjoys a tighter upper bound on the variance by using control variates (recall that $\Var(V_{\sufmca}(s))\leq\Var(V_{\mathrm{MC}}(s))$).
Finally, we note that since TD is a special case of Equation~\ref{eqn:phased_dae_update} with $\hat{A}\equiv 0$, it follows that
$\Var(V^{T+1}_{\mathrm{DAE}^*(k)}(s)|\vvec^{T}_{\mathrm{DAE}^*(k)}{=}\vvec)
    \leq \Var(V^{T+1}_{\mathrm{TD}(k)}(s)|\vvec^{T}_{\mathrm{TD}(k)}{=}\vvec)$.

\section{Empirical Illustration}\label{sec:exp}

\begin{figure}[t]
    \begin{minipage}{.49\linewidth}
        \centering
        \begin{tikzpicture}
        
        % \def\n{5}
    
        % \foreach \i in {1, ..., \n}{
        %     \node [draw, circle] (N\i) at (\i * -360 / \n + 360 / \n:2) {\i};
        % }
        % \pgfmathtruncatemacro\m{\n-1};
        % \foreach \i in {1, ..., \m }{
        %     \pgfmathtruncatemacro\k{\i+1};
        %     \draw (N\i) edge [->, bend left] node [font=\small, fill=white] {-1} (N\k);
        %     \draw (N\i) edge [->, bend right] node [font=\small, fill=white] {1} (N\k);    
        % }   
    
        \node[draw, circle, minimum size=8mm] (s1) at (0, 0) {1};
        \node[draw, circle, minimum size=8mm] (s2) at (1.5, 0) {2};
        \node[circle, minimum size=8mm] (sinv) at (3, 0) {};
        \node[circle, minimum size=8mm] at (.75, 0.1) {\vdots};
        \node[circle, minimum size=8mm] at (2.25, 0.1) {\vdots};
        \node[draw, circle, minimum size=8mm] (s8) at (4.25, 0) {8};
        \node[circle, minimum size=8mm] at (5, 0.1) {\vdots};
        \node (cdots) at (3.5, 0) {$\cdots$};
    
        \draw[->, bend right=60] (s1) to (s2);
        \draw[->, bend left=60] (s1) to (s2);
    
        \draw[->, bend right=60] (s2) to (sinv) ;
        \draw[->, bend left=60] (s2) to (sinv);
    
        \draw[->, bend right=60] (s8) to (5.75,0) ;
        \draw[->, bend left=60] (s8) to (5.75,0);
    
        \draw[->] (5.75, 0) -- (5.75, -1.3) -- (0, -1.3) -- (s1);
        
        \end{tikzpicture}
        \captionof{figure}{Chain MDP with $\statespace=\{1,2,..., 8\}$, $\actionspace=\{1,...,|\actionspace|\}$, and $r(s,a) = \frac{(-1)^a}{4}$ (independent of $s$). Agents return to state $1$ after state $8$.}
        \label{fig:chain_env}
    
    \end{minipage}
    \begin{minipage}{.49\linewidth}        
        \centering
        \captionof{table}{Parameters of the experiment.}
        \begin{tabular}{cc}
            \toprule
            Param. & Description \\ \hline
            $|\actionspace|$ & action space size \\
            $n$ & number of sample trajectories \\
            $k$ & backup length \\
            $p_r$ & probability of reward masking \\
            $p_s$ & probability of sticky transition \\
            \bottomrule
        \end{tabular}            
        \label{tab:pars}
    \end{minipage}
    
\end{figure}

In this section, we illustrate the behaviors of different estimators through experiments based on variants of the chain environment shown in Figure~\ref{fig:chain_env}.
Despite its simplicity, the environment is sufficiently expressive to illustrate various properties of the estimators analyzed in the present study, such as how the variances depend on the bootstrapping states.

All experiments are based on the phased setting, where values are updated synchronously at the end of each phase.
We fix the number of phases at 2500, which we found sufficient convergence, the policy $\pi$ to be uniform, the discount factor at $\gamma=0.99$, and consider two types of stochasticity:

\begin{enumerate}[label=\arabic*., itemsep=0pt, wide=0pt, left=0pt]
    \item \textbf{Reward masking}: Rewards are masked out with probability $p_r$ (i.e., $r=0$ with probability $p_r$).
    \item \textbf{Sticky transition}: With probability $p_s$, the agent stays in the current state instead of advancing to the next state (i.e., $s_{t+1}=s_t$ with probability $p_s$).
\end{enumerate}

For simplicity, we consider cases where $|\actionspace|$ is even, such that $V^\pi\equiv 0$.
The variances of the MC and the MC-A estimators are equal to
\begin{align}
    \Var(V_\mathrm{MC}(s)) = \frac{(1-p_r)}{16(1-\gamma^2)n},\;\text{and}\;    \Var(V_{\sufmca}(s)) = p_r\Var(V_\mathrm{MC}(s)),
\end{align}
respectively, and will be used as baselines for comparing the estimators.
We note that the variances do not depend on $p_s$ due to the symmetric nature of the states.

Each run (configuration) is repeated for 1000 different random seeds to ensure statistical significance.
We compare the mean squared error between the true value function $V^\pi$ and the estimated value ($\mathrm{MSE}(\hat{V}, V^\pi)$, $\mathrm{MSE}_\pi$ for short) , which reflects how quickly the estimator converges to $V^\pi$.
We also measure the error between the estimated value and the expected Bellman update at each phase ($\mathrm{MSE}(\hat{V}, V_\mathrm{BE})$, $\mathrm{MSE}_\mathrm{BE}$ for short).
For TD, $\mathrm{MSE}_\mathrm{BE}$ corresponds to the variance of the update, since the expected TD update follows the Bellman update (i.e., $\E[\vvec^T_\mathrm{TD}] = \vvec^T_\mathrm{BE}$).
For DAE, it additionally captures the small-sample bias introduced by the regression estimator.

% exp 1 deterministic -> td approach MC, dae approach 0
\begin{figure*}[t]
    \centering
    \includegraphics[width=0.98\linewidth]{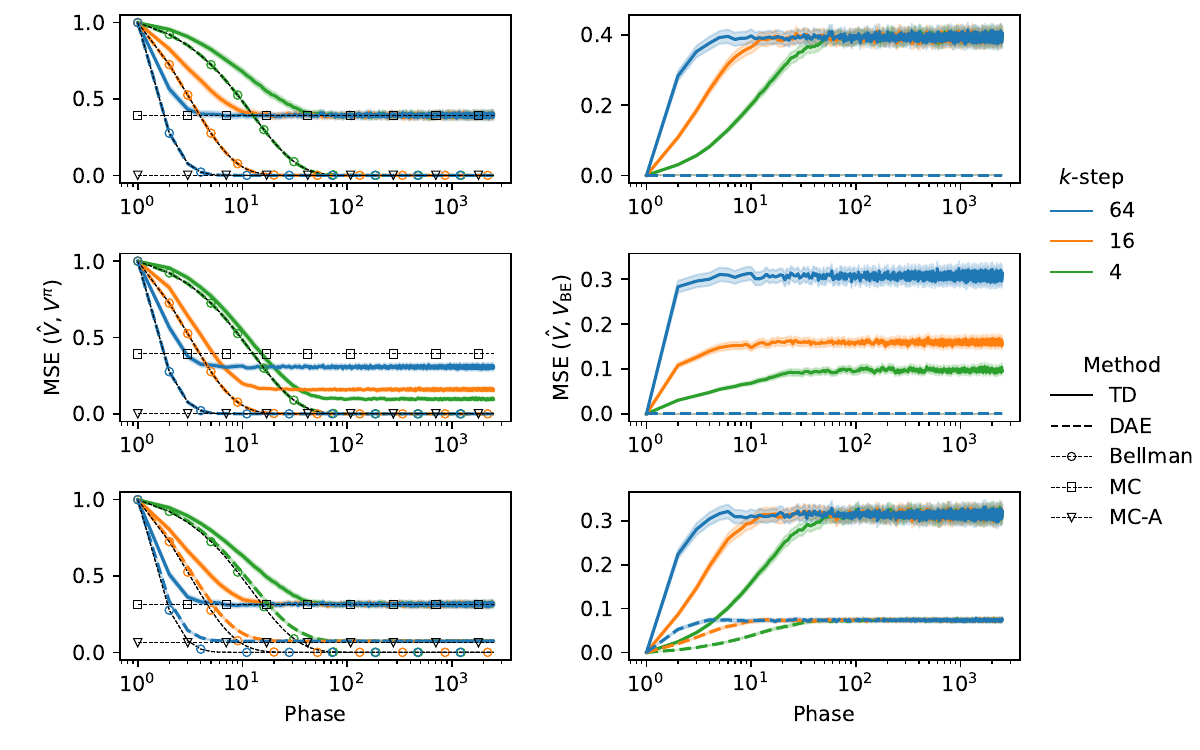}
    \caption{The \textbf{Deterministic} case (top), the \textbf{Sticky Transition} case (middle), and the \textbf{Reward Masking} case (bottom). Lines and shadings represent (mean $\pm$ 3 standard error).}%
    \label{fig:deterministic_and_stoch}
\end{figure*}

\paragraph{The Deterministic Case \normalfont{($|\actionspace|=2$, $n=8$, $k\in\{4, 16, 64\}$, $p_s=0$, $p_r=0$)}}

We first examine the effect of $k$ in the simplest setting with a deterministic environment.
Since the transitions are fully deterministic and independent of actions, the bootstrapping states are identical for a given starting state.
Our analysis (Section~\ref{sec:var_mctd}) suggests that bootstrapping loses its ability to reduce variance when the value estimates of the bootstrapping states are fully correlated, and TD would behave like MC asymptotically in this case.
Indeed, Figure~\ref{fig:deterministic_and_stoch} (top) shows that TD learning, independent of $k$, approaches the same $\mathrm{MSE}_\pi$ as MC.
Note that this cannot be explained by the bound given by Equation~\ref{eqn:old_bound}, which predicts that the asymptotic error would grow as $k$ increases.
From $\mathrm{MSE}_\mathrm{BE}$, we also see the effect of variance injection, where the variances increase with respect to the number of updates.
On the other hand, we see DAE achieving almost zero $\mathrm{MSE}_\mathrm{BE}$, demonstrating the effectiveness of the variance reduction.

\paragraph{The Sticky Transition Case \normalfont{($|\actionspace|=2$, $n=8$, $k\in\{4, 16, 64\}$, $p_s=0.25$, $p_r=0$)}}
We now examine how stochastic transitions affect the estimators.
Figure~\ref{fig:deterministic_and_stoch} (middle) shows that, counterintuitively, stochasticity \emph{reduces} the variance of TD.
Furthermore, the learning curves now follow the common bias-variance tradeoff intuition of multi-step learning (i.e., larger $k$ learns faster but leads to higher variance and vice versa).
We emphasize that this setting effectively differs from the deterministic case only in how the bootstrapping states are sampled.
As such, the variance reduction can only be explained by TD learning's ability to effectively aggregate value estimates over more diverse states (cf. Figure~\ref{fig:td_correlation}).
DAE converges with rates similar to Bellman iterations again, since $B^\pi \equiv 0$ does not depend on $p_s$.

So far, both cases have almost full coverage of the state-action space from the data in each phase, and have variances that can be fully explained by the advantage function (cf. Equation~\ref{eqn:decomp_det}).
This allowed DAE to converge almost as fast as Bellman iterations.
To see when this breaks down, we next consider the stochastic reward ($B^\pi\not\equiv 0$) and the large $|\actionspace|$ settings.

\paragraph{The Reward Masking Case \normalfont{($|\actionspace|=2$, $n=8$, $k\in\{4, 16, 64\}$, $p_s=0$, $p_r=0.2$)}}
Similar to the deterministic case, Figure~\ref{fig:deterministic_and_stoch} (bottom) shows that, asymptotically, TD performs similar to MC irrespective of $k$, as the bootstrapping states are deterministic.
On the other hand, the variance of MC-A is no longer zero since the variance of the rewards cannot be explained by the actions.
This causes the $\mathrm{MSE}_\pi$ of DAE to converge slightly above Bellman iterations asymptotically.

\paragraph{The Low Coverage Case \normalfont{($|\actionspace|\in\{4, 16, 64\}$, $n=8$, $k=16$, $p_s=0$, $p_r=0$)}}
Finally, we consider the low coverage case, where the least squares objective become underdetermined for DAE.
In the context of RL, this means that most of the actions are not sampled, and the advantage estimates become unreliable.
We follow the common practice of choosing the minimum Euclidean-norm solution.
Figure~\ref{fig:coverage} shows how the small-sample bias affects learning in early phases (phase $<10$) and its relationship to the size of the action space.
Interestingly, we find that even when $|\actionspace|\geq 16$, where DAE converges to suboptimal solutions compared to the full coverage cases, DAE still converges to a lower $\mathrm{MSE}_\pi$ than TD, suggesting that the advantage estimates may help reduce variance even when they are poorly estimated.
\footnote{We find DAE converging to the true value function again using iterative solvers (see Appendix~\ref{app:additional_exp}).}

%We note that the initial decrease in MSE for $|\actionspace|\in\{16, 64\}$ is an artifact of the minimum Euclidean-norm solutions, which biases value estimates toward zero (the true value function in this case).

% \begin{figure}
%     \centering
%     \includegraphics[width=0.98\linewidth]{plots/exp4.pdf}
%     \caption{The stochastic reward case ($|\actionspace|{=}2$, $n{=}8$, $k\in\{4,16,64\}$, $p_s{=}0$, $p_r{=}0.2$). Lines and shadings represent (mean $\pm$ 3 standard error). Left/Right: Linear/Log $Y$-axis.}
%     \label{fig:stochastic_reward}
% \end{figure}
% \paragraph{The Stochastic Reward Case}
% In this case ($p_r\neq 0$), the DAE regression is no longer a perfect linear fit, and we can see the finite sample bias from Figure~\ref{fig:stochastic_reward} (the MSE of DAE converging slightly above the MC-A baseline.

\begin{figure*}[!h]
    \centering
    \includegraphics[width=0.9\linewidth]{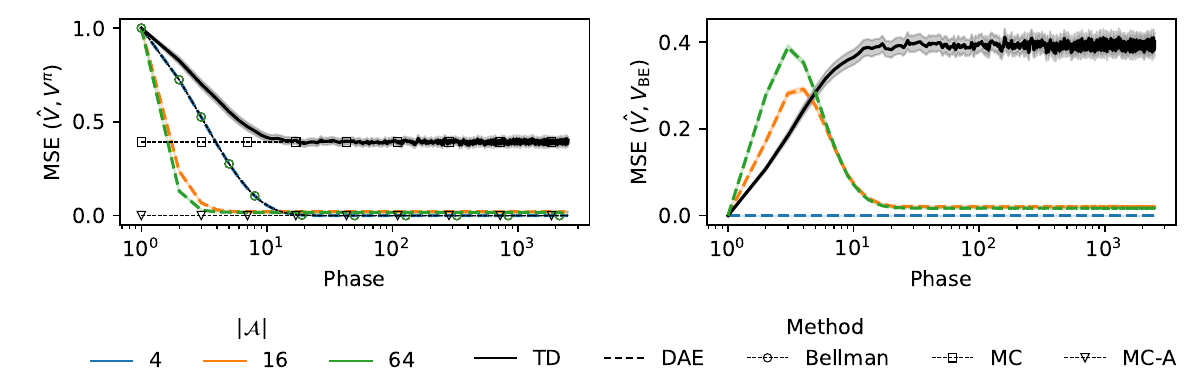}
    \caption{The low coverage case. Lines and shadings represent (mean $\pm$ 3 standard error). Note that TD (black, solid) does not depend on $|\actionspace|$.}
    \label{fig:coverage}
\end{figure*}
\section{Related Work}

The bias-variance tradeoff between MC and TD has been discussed in classical texts and surveys~\citep{sutton1998introduction, szepesvari2010algorithms, dann2014policy}; however, a rigorous analysis remained desirable.
\citet{kearns2000bias} analyzed the error bounds of multi-step TD learning in the phased setting~\citep{kearns1998finite}, which also serves as the basis of the present work.
In the batch setting,~\citet{grunewalder2007optimality} showed that LSTD~\citep{bradtke1996linear} is statistically more efficient than MC when the Markovian structure of the environment can provide additional information.
More recently, \citet{cheikhi2023statistical} derived a more precise statistical relationship between batch TD and MC by analyzing trajectory pooling.
However, the batch setting abstracts away the iterative nature of TD, and their implications in the online setting remain unclear.
Similar problems have also been studied in function approximation settings~\citep{tsitsiklis1996analysis, dalal2018finite, bhandari2018finite}, where convergence and finite-sample (time) error bounds were explored.

The advantage function~\citep{baird1995residual} is commonly used as control variates for policy gradient methods~\citep{sutton1999policy, greensmith2004variance}.
The present work demonstrates that the advantage function can also be used as control variates for policy evaluation, and shows that DAE~\citep{pan2022direct} can be seen as regression-adjusted control variates.

\section{Discussion}\label{sec:discussion}
We analyzed the behaviors of MC, TD, and DAE, and revealed one mechanism behind the variance reduction property of bootstrapping, namely, the ability to aggregate over a larger number of independent trajectories.
Furthermore, we established a connection between DAE and control variate regression, demonstrating how it can further reduce the variance of TD learning.
At its core, DAE exploits our knowledge of the policy to reduce variances and an interesting future direction would be to explore other types of control variates for policy evaluation. 

Finally, we note some limitations:
%(1) While previous work have shown that it is possible to extend results in synchronous (phased) settings to asynchronous settings by considering covering time~\citep{even2003learning}, it remains unclear how DAE would work in asynchronous settings, and to what extent the results hold under more realistic sampling (e.g., Markovian sampling) settings.
(1) In the present work, we considered only the phased and the IID settings, and it remains unclear to what extent the results generalize to more realistic sampling schemes (e.g., Markovian sampling).
(2) We focused mainly on the variance reduction property of DAE, but it should be noted that DAE also incurs additional space (store $\mmat$) and time (solve least-squares) complexities.
(3) The theoretical results for DAE only hold up to the large-sample approximation case where the control variate coefficients are optimal, and while empirical results seem to suggest that the variance reduction is beneficial even in the small-sample regime, a more rigorous analysis remains desirable.

\subsubsection*{Acknowledgments}
\label{sec:ack}
Hsiao-Ru Pan thanks Claire Vernade and Onno Eberhard for the fruitful discussions.
The authors thank the International Max Planck Research School for Intelligent Systems (IMPRS-IS) for supporting Hsiao-Ru Pan.
%%%%%%%%%%%%%%%%%%%%%%%%%%%%%%%%%%%%%%%%%%%%%%%%%%%%%%%%%%%%%%%%
%% NOTE: THIS MARKS THE END OF THE "MAIN TEXT"
%%%%%%%%%%%%%%%%%%%%%%%%%%%%%%%%%%%%%%%%%%%%%%%%%%%%%%%%%%%%%%%%

%%%%%%%%%%%%%%%%%%%%%%%%%%%%%%%%%%%%%%%%%%%%%%%%%%%%%%%%%%%%%%%%
%% Bibliography
%%%%%%%%%%%%%%%%%%%%%%%%%%%%%%%%%%%%%%%%%%%%%%%%%%%%%%%%%%%%%%%%
\bibliography{ref}

\begin{thebibliography}{30}
\providecommand{\natexlab}[1]{#1}
\providecommand{\url}[1]{\texttt{#1}}
\expandafter\ifx\csname urlstyle\endcsname\relax
  \providecommand{\doi}[1]{DOI: #1}\else
  \providecommand{\doi}{DOI: \begingroup \urlstyle{rm}\Url}\fi

\bibitem[Anderson et~al.(1999)Anderson, Bai, Bischof, Blackford, Demmel, Dongarra, Du~Croz, Greenbaum, Hammarling, McKenney, and Sorensen]{laug}
E.~Anderson, Z.~Bai, C.~Bischof, S.~Blackford, J.~Demmel, J.~Dongarra, J.~Du~Croz, A.~Greenbaum, S.~Hammarling, A.~McKenney, and D.~Sorensen.
\newblock \emph{{LAPACK} Users' Guide}.
\newblock Society for Industrial and Applied Mathematics, Philadelphia, PA, third edition, 1999.
\newblock ISBN 0-89871-447-8.

\bibitem[Asmussen \& Glynn(2007)Asmussen and Glynn]{asmussen2007stochastic}
S{\o}ren Asmussen and Peter~W Glynn.
\newblock \emph{Stochastic simulation: algorithms and analysis}, volume~57.
\newblock Springer, 2007.

\bibitem[Baird(1995)]{baird1995residual}
Leemon Baird.
\newblock Residual algorithms: Reinforcement learning with function approximation.
\newblock In \emph{Machine Learning Proceedings 1995}, pp.\  30--37. Elsevier, 1995.

\bibitem[Bhandari et~al.(2018)Bhandari, Russo, and Singal]{bhandari2018finite}
Jalaj Bhandari, Daniel Russo, and Raghav Singal.
\newblock A finite time analysis of temporal difference learning with linear function approximation.
\newblock In \emph{Conference on learning theory}, pp.\  1691--1692. PMLR, 2018.

\bibitem[Bradbury et~al.(2018)Bradbury, Frostig, Hawkins, Johnson, Leary, Maclaurin, Necula, Paszke, Vander{P}las, Wanderman-{M}ilne, and Zhang]{jax2018github}
James Bradbury, Roy Frostig, Peter Hawkins, Matthew~James Johnson, Chris Leary, Dougal Maclaurin, George Necula, Adam Paszke, Jake Vander{P}las, Skye Wanderman-{M}ilne, and Qiao Zhang.
\newblock {JAX}: composable transformations of {P}ython+{N}um{P}y programs, 2018.
\newblock URL \url{http://github.com/jax-ml/jax}.

\bibitem[Bradtke \& Barto(1996)Bradtke and Barto]{bradtke1996linear}
Steven~J Bradtke and Andrew~G Barto.
\newblock Linear least-squares algorithms for temporal difference learning.
\newblock \emph{Machine learning}, 22\penalty0 (1):\penalty0 33--57, 1996.

\bibitem[Burton \& R{\"o}sler(1995)Burton and R{\"o}sler]{burton1995l2}
Robert~M Burton and Uwe R{\"o}sler.
\newblock An l2 convergence theorem for random affine mappings.
\newblock \emph{Journal of applied probability}, 32\penalty0 (1):\penalty0 183--192, 1995.

\bibitem[Cheikhi \& Russo(2023)Cheikhi and Russo]{cheikhi2023statistical}
David Cheikhi and Daniel Russo.
\newblock On the statistical benefits of temporal difference learning.
\newblock In \emph{International Conference on Machine Learning}, pp.\  4269--4293. PMLR, 2023.

\bibitem[Dalal et~al.(2018)Dalal, Sz{\"o}r{\'e}nyi, Thoppe, and Mannor]{dalal2018finite}
Gal Dalal, Bal{\'a}zs Sz{\"o}r{\'e}nyi, Gugan Thoppe, and Shie Mannor.
\newblock Finite sample analyses for td (0) with function approximation.
\newblock In \emph{Proceedings of the AAAI Conference on Artificial Intelligence}, volume~32, 2018.

\bibitem[Dann et~al.(2014)Dann, Neumann, and Peters]{dann2014policy}
Christoph Dann, Gerhard Neumann, and Jan Peters.
\newblock Policy evaluation with temporal differences: A survey and comparison.
\newblock \emph{The Journal of Machine Learning Research}, 15\penalty0 (1):\penalty0 809--883, 2014.

\bibitem[Davidson \& MacKinnon(1992)Davidson and MacKinnon]{davidson1992regression}
Russell Davidson and James~G MacKinnon.
\newblock Regression-based methods for using control variates in monte carlo experiments.
\newblock \emph{Journal of Econometrics}, 54\penalty0 (1-3):\penalty0 203--222, 1992.

\bibitem[Greensmith et~al.(2004)Greensmith, Bartlett, and Baxter]{greensmith2004variance}
Evan Greensmith, Peter~L Bartlett, and Jonathan Baxter.
\newblock Variance reduction techniques for gradient estimates in reinforcement learning.
\newblock \emph{Journal of Machine Learning Research}, 5\penalty0 (9), 2004.

\bibitem[Grunewalder et~al.(2007)Grunewalder, Hochreiter, and Obermayer]{grunewalder2007optimality}
Steffen Grunewalder, Sepp Hochreiter, and Klaus Obermayer.
\newblock Optimality of lstd and its relation to mc.
\newblock In \emph{2007 International Joint Conference on Neural Networks}, pp.\  338--343. IEEE, 2007.

\bibitem[Hammersley \& Handscomb(1964)Hammersley and Handscomb]{Hammersley1964}
J.~M. Hammersley and D.~C. Handscomb.
\newblock \emph{Monte Carlo Methods}.
\newblock Springer Netherlands, 1964.
\newblock ISBN 9789400958197.
\newblock \doi{10.1007/978-94-009-5819-7}.
\newblock URL \url{http://dx.doi.org/10.1007/978-94-009-5819-7}.

\bibitem[Harris et~al.(2020)Harris, Millman, van~der Walt, Gommers, Virtanen, Cournapeau, Wieser, Taylor, Berg, Smith, Kern, Picus, Hoyer, van Kerkwijk, Brett, Haldane, del R{\'{i}}o, Wiebe, Peterson, G{\'{e}}rard-Marchant, Sheppard, Reddy, Weckesser, Abbasi, Gohlke, and Oliphant]{harris2020array}
Charles~R. Harris, K.~Jarrod Millman, St{\'{e}}fan~J. van~der Walt, Ralf Gommers, Pauli Virtanen, David Cournapeau, Eric Wieser, Julian Taylor, Sebastian Berg, Nathaniel~J. Smith, Robert Kern, Matti Picus, Stephan Hoyer, Marten~H. van Kerkwijk, Matthew Brett, Allan Haldane, Jaime~Fern{\'{a}}ndez del R{\'{i}}o, Mark Wiebe, Pearu Peterson, Pierre G{\'{e}}rard-Marchant, Kevin Sheppard, Tyler Reddy, Warren Weckesser, Hameer Abbasi, Christoph Gohlke, and Travis~E. Oliphant.
\newblock Array programming with {NumPy}.
\newblock \emph{Nature}, 585\penalty0 (7825):\penalty0 357--362, September 2020.
\newblock \doi{10.1038/s41586-020-2649-2}.
\newblock URL \url{https://doi.org/10.1038/s41586-020-2649-2}.

\bibitem[Kearns \& Singh(1998)Kearns and Singh]{kearns1998finite}
Michael Kearns and Satinder Singh.
\newblock Finite-sample convergence rates for q-learning and indirect algorithms.
\newblock \emph{Advances in neural information processing systems}, 11, 1998.

\bibitem[Kearns \& Singh(2000)Kearns and Singh]{kearns2000bias}
Michael~J Kearns and Satinder Singh.
\newblock Bias-variance error bounds for temporal difference updates.
\newblock In \emph{COLT}, pp.\  142--147, 2000.

\bibitem[Lin(2013)]{lin2013agnostic}
Winston Lin.
\newblock Agnostic notes on regression adjustments to experimental data: Reexamining freedman's critique.
\newblock \emph{The Annals of Applied Statistics}, pp.\  295--318, 2013.

\bibitem[Owen(2013)]{mcbook}
Art~B. Owen.
\newblock \emph{Monte Carlo theory, methods and examples}.
\newblock \url{https://artowen.su.domains/mc/}, 2013.

\bibitem[Paige \& Saunders(1982)Paige and Saunders]{paige1982lsqr}
Christopher~C Paige and Michael~A Saunders.
\newblock Lsqr: An algorithm for sparse linear equations and sparse least squares.
\newblock \emph{ACM Transactions on Mathematical Software (TOMS)}, 8\penalty0 (1):\penalty0 43--71, 1982.

\bibitem[Pan \& Sch{\"o}lkopf(2024)Pan and Sch{\"o}lkopf]{pan2024skill}
Hsiao-Ru Pan and Bernhard Sch{\"o}lkopf.
\newblock Skill or luck? return decomposition via advantage functions.
\newblock \emph{arXiv preprint arXiv:2402.12874}, 2024.

\bibitem[Pan et~al.(2022)Pan, G{\"u}rtler, Neitz, and Sch{\"o}lkopf]{pan2022direct}
Hsiao-Ru Pan, Nico G{\"u}rtler, Alexander Neitz, and Bernhard Sch{\"o}lkopf.
\newblock Direct advantage estimation.
\newblock \emph{Advances in Neural Information Processing Systems}, 35:\penalty0 11869--11880, 2022.

\bibitem[Puterman(2014)]{puterman2014markov}
Martin~L Puterman.
\newblock \emph{Markov decision processes: discrete stochastic dynamic programming}.
\newblock John Wiley \& Sons, 2014.

\bibitem[Sobel(1982)]{sobel1982variance}
Matthew~J Sobel.
\newblock The variance of discounted markov decision processes.
\newblock \emph{Journal of Applied Probability}, 19\penalty0 (4):\penalty0 794--802, 1982.

\bibitem[Sutton(1988)]{sutton1988learning}
Richard~S Sutton.
\newblock Learning to predict by the methods of temporal differences.
\newblock \emph{Machine learning}, 3\penalty0 (1):\penalty0 9--44, 1988.

\bibitem[Sutton et~al.(1998)Sutton, Barto, et~al.]{sutton1998introduction}
Richard~S Sutton, Andrew~G Barto, et~al.
\newblock Introduction to reinforcement learning.
\newblock 1998.

\bibitem[Sutton et~al.(1999)Sutton, McAllester, Singh, and Mansour]{sutton1999policy}
Richard~S Sutton, David McAllester, Satinder Singh, and Yishay Mansour.
\newblock Policy gradient methods for reinforcement learning with function approximation.
\newblock \emph{Advances in neural information processing systems}, 12, 1999.

\bibitem[Szepesvari(2010)]{szepesvari2010algorithms}
Csaba Szepesvari.
\newblock \emph{Algorithms for Reinforcement Learning}.
\newblock Morgan \& Claypool Publishers, 2010.

\bibitem[Tsitsiklis \& Van~Roy(1996)Tsitsiklis and Van~Roy]{tsitsiklis1996analysis}
John Tsitsiklis and Benjamin Van~Roy.
\newblock Analysis of temporal-diffference learning with function approximation.
\newblock \emph{Advances in neural information processing systems}, 9, 1996.

\bibitem[Virtanen et~al.(2020)Virtanen, Gommers, Oliphant, Haberland, Reddy, Cournapeau, Burovski, Peterson, Weckesser, Bright, {van der Walt}, Brett, Wilson, Millman, Mayorov, Nelson, Jones, Kern, Larson, Carey, Polat, Feng, Moore, {VanderPlas}, Laxalde, Perktold, Cimrman, Henriksen, Quintero, Harris, Archibald, Ribeiro, Pedregosa, {van Mulbregt}, and {SciPy 1.0 Contributors}]{2020SciPy-NMeth}
Pauli Virtanen, Ralf Gommers, Travis~E. Oliphant, Matt Haberland, Tyler Reddy, David Cournapeau, Evgeni Burovski, Pearu Peterson, Warren Weckesser, Jonathan Bright, St{\'e}fan~J. {van der Walt}, Matthew Brett, Joshua Wilson, K.~Jarrod Millman, Nikolay Mayorov, Andrew R.~J. Nelson, Eric Jones, Robert Kern, Eric Larson, C~J Carey, {\.I}lhan Polat, Yu~Feng, Eric~W. Moore, Jake {VanderPlas}, Denis Laxalde, Josef Perktold, Robert Cimrman, Ian Henriksen, E.~A. Quintero, Charles~R. Harris, Anne~M. Archibald, Ant{\^o}nio~H. Ribeiro, Fabian Pedregosa, Paul {van Mulbregt}, and {SciPy 1.0 Contributors}.
\newblock {{SciPy} 1.0: Fundamental Algorithms for Scientific Computing in Python}.
\newblock \emph{Nature Methods}, 17:\penalty0 261--272, 2020.
\newblock \doi{10.1038/s41592-019-0686-2}.

\end{thebibliography}
\bibliographystyle{rlj}

%%%%%%%%%%%%%%%%%%%%%%%%%%%%%%%%%%%%%%%%%%%%%%%%%%%%%%%%%%%%%%%%
% AUTHOR: If your paper has no supplementary materials, you may 
%         comment out the line below, which creates the title for
%         the supplementary materials.
%%%%%%%%%%%%%%%%%%%%%%%%%%%%%%%%%%%%%%%%%%%%%%%%%%%%%%%%%%%%%%%%
\beginSupplementaryMaterials

\section{Proofs}

\subsection{Proof of Lemma~\ref{lemma:var_mc}}
\label{app:proof_lemma_1}
\noindent\textbf{Lemma~\ref{lemma:var_mc}}
\begin{equation*}
    \Var(V_\mathrm{MC}(s))= \Var(\mathbf{G}) =
    \sum_{m=0}^\infty \gamma^{2km}\E\left[\left.\Var(\rmat + \gamma^k \pmat \vvec^\pi|s_0{=}s_{km})\right|s_0{=}s\right].
\end{equation*}
\begin{proof}
    Note that, when conditioned on $\gD$, both $\rmat$ and $\pmat$ become constants.
    Consequently, we have:
    \begin{align*}
        &\Var(V_\mathrm{MC}(s)) = \Var\left(\left. \mathbf{G} \right|s_0{=}s\right)\\
        &= 
        \Var\left(\left.\E\left[\left.\mathbf{G}\right| \gD, s_0{=}s\right]\right|s_0{=}s\right)+ \E\left[\left.\Var\left(\left.\mathbf{G}\right| \gD, s_0{=}s\right)\right|s_0{=}s\right] 
        \\
        &= \Var\left(\left. \rmat + \gamma^k \pmat \vvec^\pi\right|s_0{=}s\right) + \E\left[\left.\Var\left(\left.\frac{1}{n}\sum_{i=1}^n\sum_{t=k}^\infty\gamma^t r_t^i\right|\gD\right)\right|s_0{=}s\right] \\
        &= \Var\left(\left. \rmat + \gamma^k \pmat \vvec^\pi\right|s_0{=}s\right) + \frac{1}{n}\E\left[\left.\sum_{i=1}^n\Var\left(\gamma^kV_\mathrm{MC}(s_k^i)\right)\right|s_0{=}s\right] \\
        &= \Var\left(\left. \rmat + \gamma^k \pmat \vvec^\pi\right|s_0{=}s\right) + \gamma^{2k}\E\left[\Var\left(V_\mathrm{MC}(s_k)\right)|s_0{=}s\right].
    \end{align*}    
    Since $\Var(V_\mathrm{MC}(\cdot))$ is bounded, Lemma~\ref{lemma:var_mc} follows from expanding this recursion.
\end{proof}

\subsection{Proof of Theorem~\ref{thm:td_mc}} \label{app:proof_thm1}
\noindent\textbf{Theorem~\ref{thm:td_mc}}
\begin{equation*}
    \lim_{T\rightarrow\infty}\Var(V^T_{\mathrm{TD}(k)}(s))\leq \Var(V_\mathrm{MC}(s)).
\end{equation*}

\begin{proof}
    First, we show that $\lim_{T\rightarrow\infty}\Var(V^T_{\mathrm{TD}(k)}(s))$ converges.
    Note that
    \begin{align*}
      V^T_{\mathrm{TD}(k)}(s) = \rmat + \gamma^k \pmat\vvec^{T-1}_{\mathrm{TD}(k)},
    \end{align*}
    is a special case of random affine iterated system of the form:
    \begin{align}
        X_t = M_t X_{t-1} + N_t,
    \end{align}
    where $(M_t, N_t)$ are IID random variables.
    Furthermore, since $\gamma^k\pmat$ is a contraction respect to $||\cdot||_\infty$ and $\rmat$ is bounded, we know that the random vector $\vvec^T_{\mathrm{TD}(k)}$ converges in distribution with respect to the Wassterstein distance $W_\infty$~\citep{burton1995l2}.
    Consequently, all finite moments of $\vvec^T_{\mathrm{TD}(k)}$ also converges as $T\rightarrow\infty$.
    
    By Lemma~\ref{lemma:var_td}, we have:
    \begin{align*}
    \Var(V^T_{\mathrm{TD}(k)}(s)) \leq \sum_{m=0}^{T-1}\gamma^{2km} \E\left[\left.\Var\left(\left.\rmat + \gamma^k\pmat\bar{\vvec}^{T-1-m}_{\mathrm{TD}(k)}\right|s_0{=}s_{km}\right)\right|s_0{=}s\right].
    \end{align*}
    It is enough to show that the summation on the right hand side converges to $\Var(V_\mathrm{MC}(s))$ as $T\rightarrow\infty$.
    Let $x_{T-m,m} = \E\left[\left.\Var\left(\left.\rmat + \gamma^k\pmat\bar{\vvec}^{T-1-m}_{\mathrm{TD}(k)}\right|s_0{=}s_{km}\right)\right|s_0{=}s\right]$, we are interested in the following limit
    \begin{equation*}
    \lim_{T\rightarrow\infty}\sum_{m=0}^{T-1}\gamma^{2km}x_{T-m,m}.
    \end{equation*}
    Since $x_{\infty, m}\coloneq\lim_{T\rightarrow\infty} x_{T-m,m} = \E\left[\left.\Var\left(\left.\rmat + \gamma^k\pmat \vvec^\pi\right|s_0{=}s_{km}\right)\right|s_0{=}s\right]$, there exists $N\in\sN$ such that if $T-m > N$ then $|x_{T-m,m} - x_{\infty, m}| < \epsilon$.
    In addition, since $\rmat$, $\pmat$ and $\vvec^\pi$ are all bounded, there exists $M \in\sR$ such that $|x_{\infty,m}| < M$ and $|x_{T-m, m}|<M$ for all $m, T-m\in\sZ_{+}$.
    Consequently,
    \begin{align*}
        &\left|\sum_{m=0}^{T-1}\gamma^{2km}x_{T-m,m} - \sum_{m=0}^\infty \gamma^{2km}x_{\infty,m}\right| \\
        \leq& \left|\sum_{m=0}^{T-1}\gamma^{2km}(x_{T-m,m} -x_{\infty,m})\right| + \left|\sum_{m=T}^\infty \gamma^{2km}x_{\infty,m}\right| \\
        \leq& \left|\sum_{m=0}^{T-n-1}\gamma^{2km}(x_{T-m,m} -x_{\infty,m})\right| + 
        \left|\sum_{m=T-n}^{T-1}\gamma^{2km}(x_{T-m,m} -x_{\infty,m})\right| + 
        \frac{M\gamma^{2kT}}{1-\gamma^{2k}}\\
        \leq& \frac{\epsilon}{1-\gamma^{2k}} + 
        \frac{2M\gamma^{2k(T-n)}}{1-\gamma^{2k}} + 
        \frac{M \gamma^{2kT}}{1-\gamma^{2k}}
    \end{align*}
    is arbitrarily small as $T\rightarrow\infty$, and
    \begin{equation*}
    \lim_{T\rightarrow\infty}\Var(V^T_{\mathrm{TD}(k)}(s)) \leq\lim_{T\rightarrow\infty}\sum_{m=0}^{T-1}\gamma^{2km}x_{T-m,m}=\sum_{m=0}^{\infty}\gamma^{2km}x_{\infty,m} = \Var(V_\mathrm{MC}(s)).
    \end{equation*}
        
\end{proof}

\paragraph{Extension to IID Sampling}

In the IID setting, we sample states independently from a fixed distribution
$\mu$ (e.g., the occupancy measure) and update their values accordingly.
Assuming that $\mu$ has full support over the state space $\statespace$, the above analysis can be extended to the asynchronous setting with minor modifications.
Specifically, at each iteration $T$, we sample a starting state $s\sim\mu$ and update its value using the $k$-step TD learning rule.
Accordingly, $\gD$ now denotes a collection of $k$-step trajectories originating from $s$, rather than samples covering the entire state space $\statespace$.
We are now ready to proof the following theorem:
\begin{theorem}
    \label{thm:td_mc_iid}
    Given distribution $\mu$ over $\statespace$. We define the IID phased TD iteration as a two-step process:
    (1) sample a state $s\sim\mu$, and (2) update its value estimate by
    \begin{align*}
        V^T_{\mathrm{TD}(k)}(\tilde{s}) =
        \begin{cases}
            \rmat + \gamma^k \pmat\vvec^{T-1}_{\mathrm{TD}(k)} &  \tilde{s} = s \\
            V^{T-1}_\mathrm{TD}(\tilde{s}) & \text{otherwise}
        \end{cases}.
    \end{align*}
    If $\mu$ covers $\statespace$, then $\lim_{T\rightarrow\infty}\Var(V^T_{\mathrm{TD}(k)}(s))\leq \Var(V_\mathrm{MC}(s))$.
\end{theorem}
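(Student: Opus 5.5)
The plan is to mirror the proof of Theorem~\ref{thm:td_mc}, replacing the deterministic phase index by a random \emph{last-update time}. For a fixed state $s$ and iteration $T$, let $\sigma_T(s)$ be the last iteration $\le T$ at which $s$ was sampled; if $s$ has never been sampled, the value is the initialization $0$. Because $\mu$ covers $\statespace$, $\Pr(\sigma_T(s)\text{ undefined})=(1-\mu(s))^T\to 0$. Conditioned on $\sigma_T(s)=t$, we have $V^T(s)=V^t(s)=\rmat+\gamma^k\pmat\vvec^{t-1}$. Here the trajectories $\gD$ drawn at time $t$ are independent of $\vvec^{t-1}$ and of the sampling events at times $t{+}1,\dots,T$. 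Conditioning on the whole sequence of sampled states (the schedule) is therefore harmless, and the argument can be run conditionally on that schedule.

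First, fix the schedule $\omega=(s^{(1)},s^{(2)},\dots)$. Conditioned on $\omega$, rerun the argument of Lemma~\ref{lemma:var_td} verbatim, using the law of total variance over $\gD$ at time $\sigma_T(s)$ and the inequality $\Var(\pmat\vvec\mid\gD)\le\pmat\Var(\vvec)$, which is Jensen/Cauchy--Schwarz on a convex combination. This gives
\[
\Var(V^T(s)\mid\omega)\le \Var\left(\left.\rmat+\gamma^k\pmat\bar{\vvec}^{\sigma_T(s)-1}_\omega\right|s_0{=}s\right)+\gamma^{2k}\E_{s_k}\left[\left.\Var(V^{\sigma_T(s)-1}(s_k)\mid\omega)\right|s_0{=}s\right],
\]
where $\bar{\vvec}_\omega$ is the conditional mean. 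That mean follows the asynchronous (Gauss--Seidel-like) $k$-step Bellman iteration along $\omega$. Unrolling gives a sum over a chain of strictly decreasing times $T\ge t_0>t_1>\dots$, with $t_{m+1}=\sigma_{t_m-1}(s_{km})$, of terms $\gamma^{2km}x_{t_m,m}$, exactly as in the proof of Theorem~\ref{thm:td_mc}. Each term has the form $\E[\Var(\rmat+\gamma^k\pmat\bar{\vvec}^{t_m-1}_\omega\mid s_0{=}s_{km})]$. The chain terminates at the initialization, where the variance is $0$.

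Second, take limits. For almost every $\omega$, each state is sampled infinitely often. For such $\omega$ the asynchronous Bellman iteration $\bar{\vvec}^t_\omega$ converges to $\vvec^\pi$, since every state is updated infinitely often and $\gamma^k$-contraction holds in $\|\cdot\|_\infty$. Moreover, for each fixed $m$, $t_m\to\infty$ as $T\to\infty$, since each gap $t_m-t_{m+1}$ is almost surely finite. Hence $x_{t_m,m}\to x_{\infty,m}$, which is the $m$-th term of Lemma~\ref{lemma:var_mc}. All terms are uniformly bounded by some $M$, so the geometric weights $\gamma^{2km}$ let the $\epsilon$/tail splitting from the proof of Theorem~\ref{thm:td_mc} go through, with dominated convergence replacing the deterministic index shift. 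This gives $\limsup_T\Var(V^T(s)\mid\omega)\le\Var(V_\mathrm{MC}(s))$ almost surely.

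Third, remove the conditioning by the law of total variance: $\Var(V^T(s))=\E[\Var(V^T(s)\mid\omega)]+\Var(\E[V^T(s)\mid\omega])$. The first term is handled by dominated convergence. The second term is the main obstacle, because the conditional mean $\bar V^T_\omega(s)$ depends on the random schedule. I would show $\bar V^T_\omega(s)\to V^\pi(s)$ almost surely with uniform bounds, so that this variance tends to $0$. Existence of the limit itself would again be obtained, as in Theorem~\ref{thm:td_mc}, by viewing the iteration as a random affine system $X_t=M_tX_{t-1}+N_t$ with i.i.d.\ $(M_t,N_t)$. Here $M_t$ is the identity off the sampled row and has $\gamma^k\pmat$ on it. $M_t$ is only non-expansive individually, but a product over any window in which all states are sampled is a $\gamma^k$-contraction, which suffices for the Burton--R\"osler argument. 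If the contraction-on-windows step is delicate, I would fall back on stating the bound with $\limsup$.
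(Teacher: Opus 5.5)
Your proof is correct, but it is organized differently from the paper's.

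\textbf{The paper's route.} The paper never conditions on the schedule. It applies the law of total variance only to the one-step event ``$s$ is sampled now,'' which gives the exact recursion $\Var(V^{T+1}(s))=\mu(s)\Var(\rmat+\gamma^k\pmat\vvec^T)+(1-\mu(s))\Var(V^T(s))+\mu(s)(1-\mu(s))\Delta^T$. Here $\Delta^T$ is the squared gap between the two unconditional means. The unconditional mean follows the damped iteration $\bar{V}^T(s)=\mu(s)(\E[\rmat]+\gamma^k\E[\pmat]\bar{\vvec}^{T-1})+(1-\mu(s))\bar{V}^{T-1}(s)$, which tends to $V^\pi(s)$, so $\Delta^T\to 0$. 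Geometric averaging then identifies $\lim_T\Var(V^T(s))$ with $\lim_T\Var(\rmat+\gamma^k\pmat\vvec^T\mid s_0{=}s)$. Lemma~\ref{lemma:var_td}, applied with unconditional moments, then closes the argument as in Theorem~\ref{thm:td_mc}; this is valid because $\gD$ is independent of $\vvec^T$ even after averaging over the schedule.

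\textbf{Your route.} You collect all schedule randomness into the single outer term $\Var(\E[V^T(s)\mid\omega])$, the analogue of the paper's $\Delta^T$ contributions. You pay for this by running Lemma~\ref{lemma:var_td} along random chains of last-update times against an asynchronous Bellman iteration.

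\textbf{What each buys.} The paper's route is shorter and reuses the synchronous recursion almost verbatim, but it takes the existence of $\lim_T\Var(\rmat+\gamma^k\pmat\vvec^T)$ for granted. Yours is heavier, but it gives the $\limsup$ bound with no existence assumption. It also uses nothing about $\mu$ beyond ``the schedule is independent of the trajectories and visits every state infinitely often.'' So it also covers cyclic sweeps or any other data-independent schedule, which makes the paper's closing remark precise.

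\textbf{Smaller points.} The step you flag as the main obstacle is easy: $\Var(\E[V^T(s)\mid\omega])\le\E[(\bar{V}^T_\omega(s)-V^\pi(s))^2]\to 0$ by bounded convergence. Three small fixes are needed:
\begin{itemize}
\item The chain should read $t_{m+1}=\sigma_{t_m-1}(s_{k(m+1)})$.
\item $t_m\to\infty$ follows because $\sigma_t(s')\to\infty$ for every state $s'$ sampled infinitely often, not from finiteness of the gaps.
\item Your third step needs reverse Fatou with the uniform bound rather than dominated convergence, since you only control $\limsup_T\Var(V^T(s)\mid\omega)$.
\end{itemize}
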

\begin{proof}
    Under this sampling scheme, the expected value vector evolves according to a slightly different equation $\bar{V}^T_{\mathrm{TD}(k)}(s) = \mu(s)(\E[\rmat] + \gamma^k\E[\pmat]\bar\vvec^{T-1}_{\mathrm{TD}(k)}) + (1 - \mu(s))\bar{V}^{T-1}_{\mathrm{TD}(k)}(s)$.
    If $\mu$ covers $\statespace$ (i.e., $\mu(s) > 0$ for all $s\in\statespace$), then $\bar{V}^T_{\mathrm{TD}(k)}(s)$ converges to $\vvec^\pi$.
    Note that Lemma~\ref{lemma:var_td} does not depend on the sampling distribution $\mu$, and continues to hold for the sampled state.
    However, we now need to account for the variance from whether a state gets updated or not.
    More precisely, we have
    \begin{align*}
        \Var(V^{T+1}_{\mathrm{TD}(k)}(s)) = 
        &\mu(s)\Var(\rmat + \gamma^k \pmat\vvec^{T}_{\mathrm{TD}(k)}) + (1-\mu(s))\Var(V^{T}_{\mathrm{TD}(k)}(s)) \\
        &+ \mu(s)(1-\mu(s))\left(\E[\rmat + \gamma^k \pmat\vvec^{T}_{\mathrm{TD}(k)}]-\E[V^{T}_{\mathrm{TD}(k)}(s)]\right)^2.  
    \end{align*}
    We now denote
    \begin{equation*}
        \Delta^T = \left(\E[\rmat + \gamma^k \pmat\vvec^{T}_{\mathrm{TD}(k)}]-\E[V^{T}_{\mathrm{TD}(k)}(s)]\right)^2.
    \end{equation*}
    Since the expectation of $\vvec^{T}_{\mathrm{TD}(k)}$ converges to $\vvec^\pi$, we have $\Delta^T\rightarrow 0$.
    Expanding this recursion gives
    \begin{align*}
        \Var(V^{T+1}_{\mathrm{TD}(k)}(s)) =& \\
        &\mu(s)\left(
        \sum_{t=0}^T(1-\mu(s))^t\Var(\rmat + \gamma^k \pmat\vvec^{T-t}_{\mathrm{TD}(k)}) 
        + \sum_{t=0}^T(1-\mu(s))^{t+1} \Delta^{T-t}\right).  
    \end{align*}
    In the limit $T\rightarrow \infty$, the first sum converges to $\lim_{T\rightarrow\infty}\Var(\rmat + \gamma^k \pmat\vvec^{T}_{\mathrm{TD}(k)})$, and the second sum converges to $\lim_{T\rightarrow\infty}\Delta^{T-t}=0$.
    The results then follow from Lemma~\ref{lemma:var_td}.
\end{proof}
Remark: The key requirement for this result is that trajectory sampling be independent of the value estimates.
This requirement may be violated in the Markovian sampling setting, where the value estimates can be correlated with the next-state distribution (i.e.,
$\pmat$ and $\vvec^{T-1}$ are dependent).
Such dependence invalidates the conditional variance bound $\Var\left(\left.\pmat\vvec^{T-1}_{\mathrm{TD}(k)}\right|\gD\right) \leq \pmat\Var\left(\vvec^{T-1}_{\mathrm{TD}(k)}\right)$, which is crucial for the analysis.

\subsection{Proof of Corollary~\ref{cor:sample_efficiency}} \label{app:proof_cor1}
\noindent\textbf{Corollary~\ref{cor:sample_efficiency}}
If $\vvec^{\tau}_{\mathrm{TD}(k)}{=}\vvec^{\tau}_{\mathrm{TD}(kT)}{=}\vvec$, then
\begin{equation*}
    \Var(V^{\tau+T}_{\mathrm{TD}(k)}(s)|\vvec^{\tau}_{\mathrm{TD}(k)}{=}\vvec) \leq 
    \Var(V^{\tau+1}_{\mathrm{TD}(kT)}(s)|\vvec^{\tau}_{\mathrm{TD}(kT)}{=}\vvec).
\end{equation*}

\begin{proof}
    Without loss of generality, we let $\tau=0$, since $\vvec^0$ is arbitrary.
    We proof by induction on $T$.
    For $T=1$, equality holds.
    For $T>1$, assume the inequality holds for $T-1$.
    By Lemma~\ref{lemma:var_td}, we have:
    \begin{align*}
        \Var(V^T_{\mathrm{TD}(k)}(s)) \leq \Var\left(\left. \rmat + \gamma^k\pmat\bar{\vvec}^{T-1}_{\mathrm{TD}(k)}\right|s_0{=}s\right)
    + \gamma^{2k}\E_{s_k}\left[\left.\Var\left(V^{T-1}_{\mathrm{TD}(k)}(s_k)\right)\right|s_0{=}s\right].
    \end{align*}
    Similar to Lemma~\ref{lemma:var_mc}, we can break down the variance of long trajectories into segments, which leads to:
    \begin{align*}
        \Var(V^1_{\mathrm{TD}(kT)}(s)) &= \Var\left(\left. \frac{1}{n}\sum_{i=1}^n\left(\sum_{t=0}^{kT-1} \gamma^t r_t^i + \gamma^{kT}V^0_{\mathrm{TD}(kT)}(s_{kT}^i)\right)\right|s_0{=}s\right) \\
        &= \Var\left(\left. \rmat + \gamma^k\pmat\bar{\vvec}^{1}_{\mathrm{TD}(k(T-1))}\right|s_0{=}s\right) + \gamma^{2k} \E_{s_k}\left[\Var(V^1_{\mathrm{TD}(k(T-1))}(s_k))|s_0{=}s\right]
    \end{align*}
    Recall that the expected value estimates follow the Bellman update, so $\bar{\vvec}^{T-1}_{\mathrm{TD}(k)} = \bar{\vvec}^{1}_{\mathrm{TD}(k(T-1))}$.
    Since the inequality holds for $T-1$, we have:
    \begin{align*}
        \E_{s_k}\left[\left.\Var\left(V^{T-1}_{\mathrm{TD}(k)}(s_k)\right)\right|s_0{=}s\right]
        \leq \E_{s_k}\left[\left.\Var\left(V^{1}_{\mathrm{TD}(k(T-1))}(s_k)\right)\right|s_0{=}s\right],
    \end{align*}
    which concludes the proof.
\end{proof}

\subsection{Proof of Theorem~\ref{thm:dae}}\label{app:proof_thm2}
\noindent\textbf{Theorem~\ref{thm:dae}}
\begin{equation*}
 \limsup_{T\rightarrow\infty} \Var(V^{T}_{\mathrm{DAE}^*(k)}(s)) \leq \Var(V_{\sufmca}(s)).
\end{equation*}
\begin{proof}
    Note that $V^T_{\mathrm{DAE}^*(k)}(s)$ is DAE with the optimal control variate coefficients $A^{*T}$.
    Consequently, we must have:
    \begin{align}\label{eqn:dae_api}
       \Var(V^{T}_{\mathrm{DAE}^*(k)}(s))&= \Var\left(\left.\rmat + \gamma^k \pmat\vvec^{T-1}_{\mathrm{DAE}(k)} - \mmat\avec^{*T}\right|s_0{=}s\right)\\
       &\leq \Var\left(\left.\rmat + \gamma^k \pmat\vvec^{T-1}_{\mathrm{DAE}(k)} - \mmat\avec^\pi\right|s_0{=}s\right),
    \end{align}
    where $\avec^\pi$ is the true advantage function.
    Since $\avec^\pi$ is a constant vector, and $\mmat$ is uniformly bounded, we can rewrite the right hand side of this inequality by
    \begin{equation}
        \Var\left(\left.\rmat' + \gamma^k \pmat\vvec^{T-1}_{\mathrm{DAE}(k)}\right|s_0=s\right),
    \end{equation}
    where $\rmat' = \rmat - \mmat\avec^\pi$.
    By Corollay~\ref{thm:td_mc}, we know that this variance is bounded above by the variance of the MC estimator with this new reward function, which is precisely the variance of $V_{\sufmca}$.
\end{proof}
Finally, we make a remark about this corollary.
Since the advantage estimate $\avec^T$ now also depends on the bootstrapping value function, it is not clear whether the update remains a contraction, or whether higher moments (e.g., variance) also converge.
As such, we only prove the supremum limit is upper bounded by $V_{\sufmca}$.

\section{Experimental Details}
\label{app:exp_details}
Algorithm~\ref{alg:code} shows the pseudocode.
All experiments are based on Python with least-square solvers implemented by \texttt{NumPy}~\citep{harris2020array}, \texttt{SciPy}~\citep{2020SciPy-NMeth} or \texttt{JAX}~\citep{jax2018github}.
A single run (1 seed, 2500 phases) takes less than a minute on commercial CPUs, except for the large $|\actionspace|$ experiment, where we leveraged GPUs (Nvidia A100) to parallelize the least-square solver.
We use LSQR~\citep{paige1982lsqr} as the default least-square solver as we found it to be slightly faster.
The only exception is the large $|\actionspace|$ experiment, where we used the SVD-based minimum norm solver~\citep{laug} to ensure reproducibility.

\begin{algorithm}
\caption{Phased TD/DAE}
\label{alg:code}
\begin{algorithmic}[1] %[1] enables line numbers
\REQUIRE $n$, $k$, \texttt{alg}$\in$\{TD, DAE\}, \texttt{LSTSQ\_SOLVER}
\STATE Initialize $\vvec\equiv 0$
\FOR{$T=1, 2, \dots$}
    \STATE $\gD=\{\}$    
    \FOR{$s\in\statespace$}
        \FOR{$i=1,\dots, n$}
            \STATE Sample $k$-step trajectory $\tau$ from environment
            \STATE $\gD\leftarrow \gD\cup\{\tau\}$
        \ENDFOR
    \ENDFOR
    \IF{\texttt{alg} == TD}
        \STATE Compute $\rmat$, $\pmat$ from $\gD$
        \STATE $\vvec\leftarrow\rmat + \gamma^k\pmat \vvec$
    \ELSE
        \STATE Compute $\mmat$, $\rmat$, $\pmat$ from $\gD$
        \STATE $\vvec, \avec\leftarrow\mathtt{LSTSQ\_SOLVER}(||\rmat + \gamma^k\pmat \vvec - \hat{\vvec} - \mmat \hat{\avec}||^2)$
    \ENDIF
\ENDFOR
\end{algorithmic}
\end{algorithm}

\subsection{Additional Experiments}
\label{app:additional_exp}

\begin{figure}
    \centering
    \includegraphics[width=0.98\linewidth]{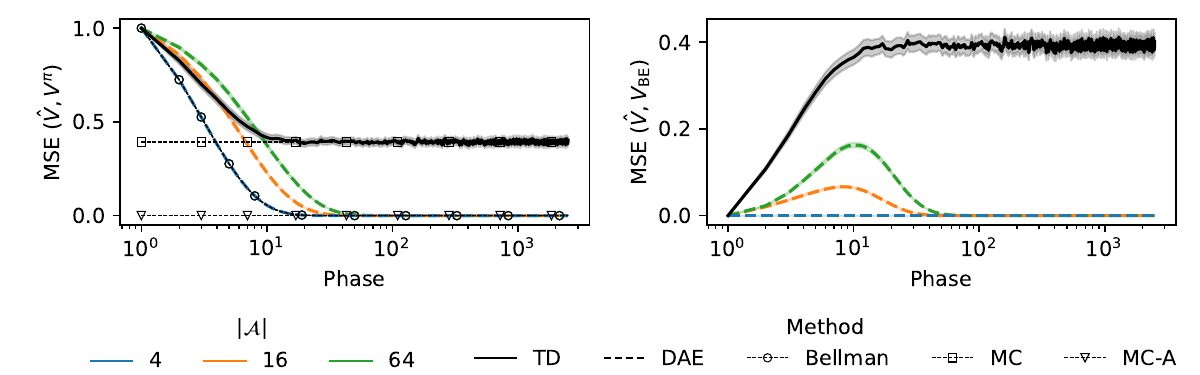}
    \caption{The low coverage case with an iterative solver. Lines and shadings represent (mean $\pm$ 3 standard error). Note that TD does not depend on $|\actionspace|$.}
    \label{fig:coverage_iter}
\end{figure}
\paragraph{The Low Coverage Case With an Iterative Solver \normalfont{($|\actionspace|\in\{4, 16, 64\}$, $n=8$, $k=16$, $p_s=0$, $p_r=0$)}}
In Section~\ref{sec:exp}, we showed that increasing the size of the action space results in DAE converging to suboptimal solutions when regularized with minimum norm solutions.
In Figure~\ref{fig:coverage_iter}, we rerun the same experiment but with an iterative least-sqaures solver (LSQR~\citep{paige1982lsqr} in this case), where the optimum in the previous phase is used as the initialization for the current phase.
We find that DAE converges again to the true value function, although at a slower rate as $|\actionspace|$ increases.
This might partially explain the success of DAE in the deep RL setting~\citep{pan2022direct}, where gradient-based optimization is used.

\end{document}